\def\eqref#1{(\ref{#1})}
\def\1{\bm{1}}
\def\vzero{{\bm{0}}}
\def\vw{{\bm{w}}}
\def\vx{{\bm{x}}}
\def\vy{{\bm{y}}}
\def\mI{{\bm{I}}}
\def\mK{{\bm{K}}}
\def\mM{{\bm{M}}}
\def\mO{{\bm{O}}}
\def\mP{{\bm{P}}}
\def\mQ{{\bm{Q}}}
\def\mU{{\bm{U}}}
\def\mV{{\bm{V}}}
\def\mW{{\bm{W}}}
\DeclareMathAlphabet{\mathsfit}{\encodingdefault}{\sfdefault}{m}{sl}
\SetMathAlphabet{\mathsfit}{bold}{\encodingdefault}{\sfdefault}{bx}{n}
\def\gB{{\mathcal{B}}}
\def\gD{{\mathcal{D}}}
\def\gO{{\mathcal{O}}}
\def\gS{{\mathcal{S}}}
\def\gX{{\mathcal{X}}}
\def\gY{{\mathcal{Y}}}
\newcommand{\R}{\mathbb{R}}
\DeclareMathOperator{\Tr}{Tr}
\title{Initialization and Regularization of \\Factorized Neural Layers}
\author{Mikhail Khodak\\
		Carnegie Mellon University\\
		\texttt{khodak@cmu.edu} \\
		\And
		Neil Tenenholtz, Lester Mackey, Nicol\`o Fusi\\
		Microsoft Research\\
		\texttt{\{netenenh,lmackey,fusi\}@microsoft.com}
	}
\newcommand\update[1]{{\color{blue}#1}}
\newtheorem{Def}{Definition}[section]
\newtheorem{Cor}{Corollary}[section]
\newtheorem{Clm}{Claim}[section]
\DeclareMathOperator{\SVD}{SVD}
\DeclareMathOperator{\VEC}{vec}
\DeclareMathOperator{\RANK}{rank}
\DeclareMathOperator{\UNIF}{Uniform}
\begin{document}

\maketitle
\vspace{-1mm}
% !TEX root = main.tex

\begin{abstract}
	
	\vspace{-1mm}
	Factorized layers---operations parameterized by products of two or more matrices---occur in a variety of deep learning contexts, including compressed model training, certain types of knowledge distillation, and multi-head self-attention architectures.
	We study how to initialize and regularize deep nets containing such layers, examining two simple, understudied schemes, {\em spectral initialization} and {\em Frobenius decay}, for improving their performance.
	The guiding insight is to design optimization routines for these networks that are as close as possible to that of their well-tuned, non-decomposed counterparts;
	we back this intuition with an analysis of how the initialization and regularization schemes impact training with gradient descent, drawing on modern attempts to understand the interplay of weight-decay and batch-normalization.
	Empirically, we highlight the benefits of spectral initialization and Frobenius decay across a variety of settings.
	In model compression, we show that they enable low-rank methods to significantly outperform both unstructured sparsity and tensor methods on the task of training low-memory residual networks;
	analogs of the schemes also improve the performance of tensor decomposition techniques.
	For knowledge distillation, Frobenius decay enables a simple, {\em overcomplete} baseline that yields a compact model from over-parameterized training without requiring retraining with or pruning a teacher network.
	Finally, we show how both schemes applied to multi-head attention lead to improved performance on both translation and unsupervised pre-training.
	\vspace{-1mm}
	
\end{abstract}
% !TEX root = main.tex

\vspace{-1mm}
\section{Introduction}
\vspace{-1mm}

Most neural network layers consist of matrix-parameterized functions followed by simple operations such as activation or normalization. 
These layers are the main sources of model expressivity, but also the biggest contributors to computation and memory cost;
thus modifying these layers to improve computational performance while maintaining performance is highly desirable. 
We study the approach of {\em factorizing} layers, i.e. reparameterizing them so that their weights are defined as products of two or more matrices.
When these are smaller than the original matrix, the resulting networks are more efficient for both training and inference \citep{denil:13,moczulski:15,ioannou:16,tai:16},
resulting in {\em model compression}.
On the other hand, if training cost is not a concern, one can increase the width or depth of the factors to over-parameterize models \citep{guo:20,cao:20}, improving learning without increasing inference-time cost.
This can be seen as a simple, teacher-free form of knowledge distillation.
Factorized layers also arise implicitly, such as in the case of multi-head attention (MHA) \citep{vaswani:17}.

Despite such appealing properties, networks with factorized neural layers are non-trivial to train from scratch, requiring custom initialization, regularization, and optimization schemes.
In this paper we focus on initialization, regularization, and how they interact with gradient-based optimization of factorized layers.
We first study {\em spectral initialization} (SI), which initializes factors using singular value decomposition (SVD) so that their product approximates the target un-factorized matrix.
Then, we study {\em Frobenius decay} (FD), which regularizes the product of matrices in a factorized layer rather than its individual terms.
Both are motivated by matching the training regimen of the analogous un-factorized optimization.
Note that SI has been previously considered in the context of model compression, albeit usually for factorizing pre-trained models \citep{nakkiran:15,yaguchi:19,yang:20b} rather than low-rank initialization for end-to-end training; 
FD has been used in model compression using an uncompressed teacher \citep{idelbayev:20}.

We formalize and study the justifications of SI and FD from both the classical perspective---matching the un-factorized objective and scaling---and in the presence of BatchNorm \citep{ioffe:15}, where this does not apply.
Extending recent studies of weight-decay \citep{zhang:19}, we argue that the effective step-size at spectral initialization is controlled by the factorization's Frobenius norm and show convincing evidence that weight-decay penalizes the {\em nuclear} norm. 

We then turn to applications, starting with low-memory training, which is dominated by unstructured sparsity methods---i.e. guessing ``lottery tickets" \citep{frankle:19}---with a prevailing trend of viewing low-rank methods as uncompetitive for compression \citep{blalock:20,zhang:20,idelbayev:20,su:20}. Here we show that, without tuning, factorized neural layers outperform all structured sparsity methods on ResNet architectures \citep{he:16}, despite lagging on VGG \citep{simonyan:15}. 
Through ablations, we show that this result is due to using both SI and FD on the factorized layers. 
We further compare to a recent evaluation of tensor-decomposition approaches for compressed WideResNet training \citep{zagoruyko:16,gray:19}, showing that (a) low-rank approaches with SI and FD can outperform them and (b) they are themselves helped by tensor-variants of SI and FD.

We also study a fledgling subfield we term {\em overcomplete} knowledge distillation \citep{arora:18b,guo:20,cao:20} in which model weights are over-parameterized as overcomplete factorizations;
after training the factors are multiplied to obtain a compact representation of the same network.
We show that FD leads to significant improvements, e.g. we outperform ResNet110 with an overcomplete ResNet56 that takes 1.5x less time to train and has 2x fewer parameters at test-time.

Finally, we study Transformer architectures, starting by showing that FD improves translation performance when applied to MHA. 
We also show that SI is critical for low-rank training of the model's linear layers. 
In an application to BERT pre-training \citep{devlin:19}, we construct a Frobenius-regularized variant---{\em FLAMB\'e}---of the LAMB method \citep{you:20}, and show that, much like for transformers, it improves performance both for full-rank and low-rank MHA layers. % it obtains better performance when the rank---and thus memory and compute cost---of MHA is reduced.

To summarize, our main contributions are (1) motivating the study of training factorized layers via both the usual setting (model compression) and recent applications (distillation, multi-head attention), (2) justifying the use of SI and FD mathematically and experimentally, and (3) demonstrating their effectiveness by providing strong baselines and novel advances in many settings.
Code to reproduce our results is available here: \url{https://github.com/microsoft/fnl_paper}.

%\begin{figure}[!h]
%	\centering
%	%	\includegraphics[width=\linewidth]{}
%	\caption{\label{fig:example}
%		image caption
%	}
%\end{figure}

%\begin{algorithm}[!h]
%	\DontPrintSemicolon
%	\KwIn{input to algorithm}
%	\For{iteration}{
%		do something \tcp*{comment}
%	}
%	\caption{\label{alg:example}
%		algorithm description
%	}
%\end{algorithm}
% !TEX root = main.tex

\vspace{-2mm}
\subsection{Related Work}
\vspace{-1mm}

We are not the first study gradient descent on factorized layers;
in particular, deep linear nets are well-studied in theory \citep{saxe:14,gunasekar:18b}.
Apart from \citet{bernacchia:18} these largely examine existing algorithms, although \citet{arora:18b} do effectively propose overcomplete knowledge distillation.
Rather than the descent method, we focus on the initialization and regularization.
For the former, several papers use SI {\em after} training \citep{nakkiran:15,yaguchi:19,yang:20b}, while \citet{ioannou:16} argue for initializing factors as though they were single layers, which we find inferior to SI in some cases.
Outside deep learning, spectral methods have also been shown to yield better initializations for certain matrix and tensor problems \citep{keshavan:10,chi:19,cai:19b}.
For regularization, \citet{gray:19} suggest {\em compression-rate scaling} (CRS), which scales weight-decay using the reduction in parameter count;
this is justified via the usual Bayesian understanding of $\ell_2$-regularization \citep{murphy:12}.
However, we find that FD is superior to any tuning of regular weight-decay, which subsumes CRS.
Our own analysis is based on recent work suggesting that the function of weight-decay is to aid optimization by preventing the effective step-size from becoming too small \citep{zhang:19}.
% !TEX root = main.tex

\vspace{-1mm}
\section{Preliminaries on Factorized Neural Layers}\label{sec:decomp}
\vspace{-1mm}

In the training phase of (self-)supervised ML, we often solve optimization problems of the form $\min_{\theta\in\Theta}\frac1{|\gS|}\sum_{(\vx,\vy)\in\gS}\ell(f_\theta(\vx),\vy)+\Omega(\theta)$, where $f_\theta:\gX\mapsto\gY$ is a function from input domain $\gX$ to output domain $\gY$ parameterized by elements $\theta\in\Theta$, $\ell:\gY\times\gY\mapsto\R$ is a scalar-valued loss function, $\Omega:\Theta\mapsto\R$ is a scalar-valued regularizer, and $\gS\subset\gX\times\gY$ is a finite set of (self-)supervised training examples.
We study the setting where $f_\theta$ is a neural network, an $L$-layer function whose parameters $\theta$ consist of $L$ matrices $\mW_i\in\R^{m_i\times n_i}$ and whose output $f_\theta(\vx)$ given input $\vx$ is defined recursively using $L$ functions $g_i$ via the formula $\vx_i=g_i(\mW_i,\vx_{i-1})$, with $\vx_0=\vx$ and $f_\theta(\vx)=\vx_L$.

The standard approach to training $f_\theta$ is to specify the regularizer $\Omega$, (randomly) pick an initialization in $\Theta$, and iteratively update the parameters using some first-order algorithm such as SGD to optimize the objective above until some stopping criterion is met.
However, in many cases we instead optimize over factorized variants of these networks, in which some or all of the matrices $\mW_i\in\R^{m_i\times n_i}$ are re-parameterized as a product $\mW_i=\mU_i(\prod_{j=1}^{d_i}\mM_{ij})\mV_i^T$ for some inner depth $d_i\ge0$ and matrices $\mU_i\in\R^{m_i\times r_i},\mV_i\in\R^{n_i\times r_i}$, and $\mM_{ij}\in\R^{r_i\times r_i}~\forall~j$.
As discussed in the following examples, this can be done to obtain better generalization, improve optimization, or satisfy practical computational or memory constraints during training or inference.
For simplicity, we drop the subscript $i$ whenever re-parameterizing only one layer and only consider the cases when inner depth $d$ is 0 or 1.

\vspace{-2mm}
\subsection{Fully-Connected Layers}\label{ssec:fc}
\vspace{-1mm}

A fully-connected layer takes an $n$-dimensional input $\vx_{i-1}$ and outputs an $m$-dimensional vector $\vx_i=\sigma(\mW\vx_{i-1})$, where $\sigma:\R^m\mapsto\R^m$ is an element-wise activation function.
Here, decomposing $\mW\in\R^{m\times n}$ into the product $\mU\mV^T$, where $\mU\in\R^{m\times r},\mV\in\R^{n\times r}$, and setting $r\ll\min\{m,n\}$ reduces computation and memory costs from $\gO(mn)$ to $\gO(mr+nr)$.
We refer to this setting as {\em model compression}.
Standard learning theory suggests that a small rank $r$ also improves generalization, e.g. for a factorized fully-connected ReLU network, applying $\|\mW\|_F^2/\|\mW\|_2^2\le\RANK(\mW)$ to \citet[Theorem~1]{neyshabur:18} and substituting $\mW_i=\mU_i\mV_i^T$ gives a w.h.p. margin-bound $\tilde\gO(\sqrt{mr/|\gS|})$ suggesting that generalization error varies with the square root of the rank (see Corollary~\ref{cor:rank}).

Alternatively, by setting $r\ge\min\{m,n\}$ and/or including an inner matrix $\mM\in\R^{r\times r}$, we can attempt to take advantage of improved optimization due to increased width \citep{du:19} and/or increased depth \citep{arora:18b}.
Crucially, this does not increase inference costs because we can recompose the matrix after training and just use the product.
As the goal is to obtain a better small model by first training a large one, we refer to this setting as {\em overcomplete knowledge distillation};
of course, unlike regular distillation it is much simpler since there is no student-teacher training stage.

\vspace{-2mm}
\subsection{Convolutional Layers}
\vspace{-1mm}

A 2d convolutional layer takes an $h\times w\times c_{i-1}$-dimensional input $\vx_{i-1}$ and outputs a $h\times w\times c_i$-dimensional output $\vx_i$ defined by convolving $c_i$ different $k\times k$ filters over each of $c_{i-1}$ input channels.
Often the result is passed through a nonlinearity.
2d convolutional layers are parameterized by $c_i\times c_{i-1}\times k\times k$ tensors and require $O(k^2c_ic_{i-1})$ memory and compute.
A straightforward way of factorizing this tensor without using tensor decomposition is to reshape it into a $c_ik\times c_{i-1}k$ matrix $\mW$, which can then be decomposed as $\mW=\mU\mV^T$ for $\mU\in\R^{c_ik\times r}$, $\mV\in\R^{c_{i-1}k\times r}$ and some rank $r>0$.
As in the fully-connected case, we can either set the rank $r$ to be small in order to reduce the number of parameters or alternatively increase the width ($r$) or the depth ($d$) of the factorization to do overcomplete knowledge distillation.

Note that in the low-rank case a naive approach does not save computation since we must first multiply $\mU$ and $\mV^T$, reshape the product $\mU\mV^T$, and then use the resulting tensor in a regular 2d convolution of the original size and complexity.
However, as shown by \citet{tai:16}, applying the 2d $k\times k$ convolution with $c_{i-1}$ input channels and $c_i$ output channels obtained by reshaping $\mU\mV^T$ is equivalent to a composition of two 1d convolutions: the first defined by $\mV^T\in\R^{r\times c_{i-1}k}$ consists of $r$ output channels and filters of size $k$ along one input dimension and the second defined by $\mU\in\R^{c_ik\times r}$ consists of $c_i$ output channels and filters of size $k$ along the other input dimension.
Together the two 1d convolutions require $O(kr(c_i+c_{i-1}))$ memory and computation, which is significantly better than the $O(k^2c_ic_{i-1})$ cost of the unfactorized case if $r\ll k\min\{c_i,c_{i-1}\}$.

\vspace{-2mm}
\subsection{Multi-Head Attention}
\vspace{-1mm}

An MHA layer \citep{vaswani:17} with $H$ attention heads and hidden dimension $d$ can be expressed as being parameterized with $4H$ matrices: one each of $\mQ_h,\mK_h,\mV_h,\mO_h\in\R^{d\times d/H}$ for each head $h$.
Then for a length-$T$ input $\vx\in\R^{T\times d}$ it outputs\vspace{-1mm}
\begin{equation}\label{eq:mha}
	\sum_{h=1}^H\operatorname{Softmax}\left(\frac{\vx\mQ_h\mK_h^T\vx^T}{\sqrt{d/H}}\right)\vx\mV_h\mO_h^T\vspace{-1mm}
\end{equation}
MHA combines $2H$ quadratic forms $\mQ_h\mK_h^T$, $\mV_h\mO_h^T$ of rank $r=d/H$, each a product of matrices, i.e. a factorized layer.
We refer to the first form as ``Query-Key" and the second as ``Output-Value."
Note that $r$ can be varied independently of $d$ to change expressivity, memory, and computation.

%\begin{figure}[!h]
%	\centering
%	%	\includegraphics[width=\linewidth]{}
%	\caption{\label{fig:example}
%		image caption
%	}
%\end{figure}

%\begin{algorithm}[!h]
%	\DontPrintSemicolon
%	\KwIn{input to algorithm}
%	\For{iteration}{
%		do something \tcp*{comment}
%	}
%	\caption{\label{alg:example}
%		algorithm description
%	}
%\end{algorithm}
% !TEX root = main.tex

\vspace{-1mm}
\section{Initialization and Regularization}
\vspace{-1mm}

\begin{figure}[!t]
	\centering
		\includegraphics[width=0.33\linewidth]{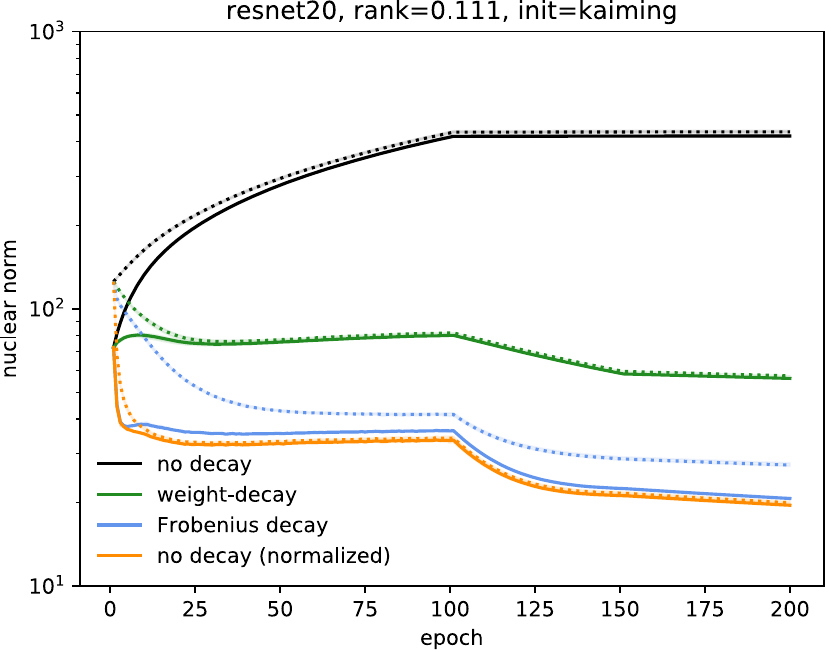}
		\includegraphics[width=0.33\linewidth]{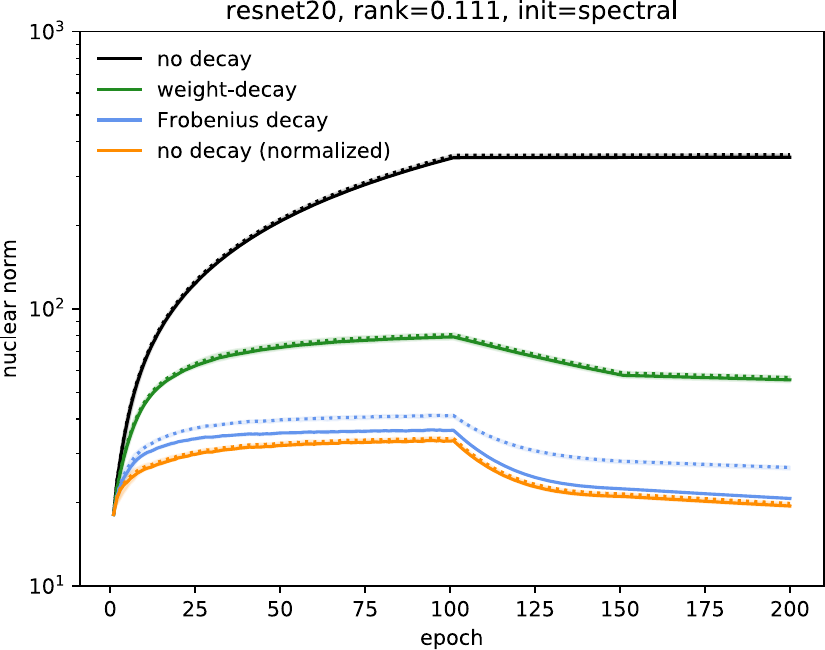}
		\hfill
		\includegraphics[width=0.32\linewidth]{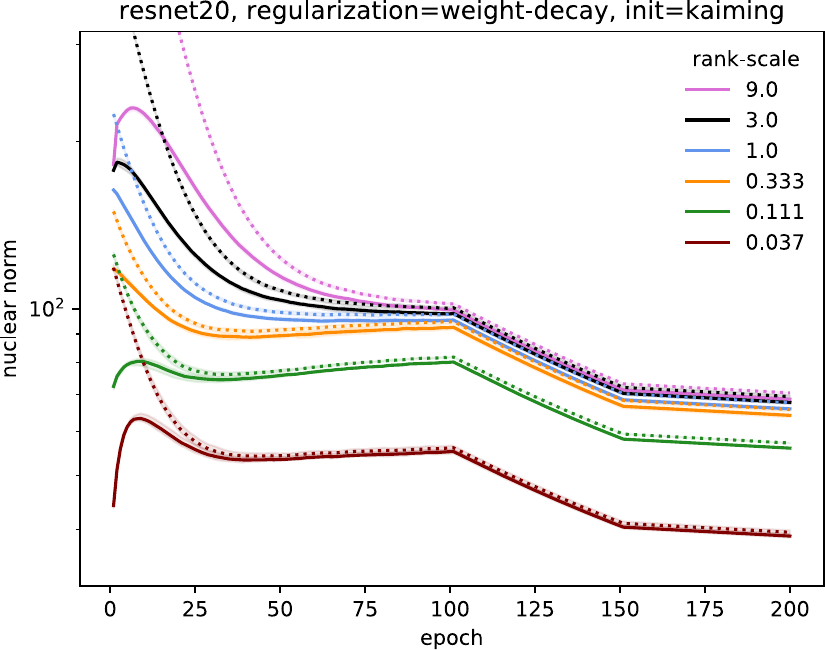}
	\vspace{-2mm}
	\caption{\label{fig:sifd}
		Average nuclear norm across factorized layers of ResNet20 during CIFAR-10 training when initialized regularly~(left) and using SI~(center);
		the dotted line is the upper bound on the nuclear norm regularized by weight-decay~\eqref{eq:nuclear}.
		The right plot track the same quantities for the case of regular weight-decay across different rank-scales.
		``no decay (normalized)" normalizes matrix factors after each step to have the same norm as ``Frobenius decay " (detailed in Section~\ref{subsec:initreg}).
	}
\end{figure}

We now define the initialization and regularization schemes we study as natural extensions of techniques for non-factorized models.
They thus require no tuning when an existing training implementation of a non-factorized deep net is available.
We later discuss how to justify these schemes when layers are normalized, e.g. using BatchNorm \citep{ioffe:15}.
In all experiments with convolutional models in this and subsequent sections we factorize all layers except the first and last, which are small, and determine layer ranks by multiplying a uniform scaling factor by the product of a layer's output channels and kernel width.
This {\em rank-scale} can be varied to attain the desired number of parameters.
Note that our approach may be further improved via a more sophisticated or adaptive rank-assignment scheme \citep{idelbayev:20}.

\vspace{-2mm}
\subsection{Spectral Initialization}
\vspace{-1mm}

%Given its outsized impact on training dynamics, initialization has been a major focus of deep learning research \citep{he:15,mishkin:16,yang:17}.
Initialization is a major focus of deep learning research \citep{he:15,mishkin:16,yang:17}.
A common approach is to prevent compounding changes in the norms of the intermediate representations across layers caused by repeated multiplication by the weight matrices.
The {\em spectral initialization} scheme for initializing {\em low-rank} factorized layers attempts to inherit, when possible, this property from an existing initialization by using SVD to ensure that the resulting product matrix is as close as possible to the original parameter:
\begin{Def}
	Let $\mW\in\R^{m\times n}$ be a parameter of an unfactorized layer.
	For $r\le\min\{m,n\}$ the {\bf spectral initialization} (SI) of the factors $\mU\in\R^{m\times r},\mV\in\R^{n\times r}$ of the corresponding factorized layer sets $\mU=\tilde\mU\sqrt{\Sigma}$ and $\mV=\tilde\mV\sqrt{\Sigma}$, for $\tilde\mU,\Sigma,\tilde\mV=\SVD_r(\mW)$ given by the rank-$r$ SVD of $\mW$.
\end{Def}
SI preserves the largest singular value of $\mW$, so if the original scheme did not suffer from a compounding increase in representation norm then neither will spectral initialization.
On the other hand, while low-rank layers impose a nullspace, SI aligns it with the directions minimized by  $\mW$.

\vspace{-2mm}
\subsection{Frobenius Decay}
\vspace{-1mm}

Weight-decay is a common regularizer for deep nets, often implemented explicitly by adding $\Omega(\theta)=\frac\lambda2\sum_{\mW\in\theta}\|\mW\|_F^2$ to the objective for some $\lambda\ge0$.
Classically, it is thought to improve generalization by constraining model capacity.
When training factorized layers parameterized by $\mU(\prod_{j=1}^d\mM_j)\mV^T$, the easiest approach to implement is to replace each $\frac\lambda2\|\mW\|_F^2$ term in $\Omega(\theta)$ by $\frac\lambda2\left(\|\mU\|_F^2+\|\mV\|_F^2+\sum_{j=1}^d\|\mM_j\|_F^2\right)$.
However, this yields a very different optimization problem:
for example, if we consider the case of $d=0$ then this regularizer is in fact an upper bound on the nuclear norm of the recomposed matrix $\mU\mV^T$ \citep[Lemma~1]{srebro:05}:\vspace{-1mm}
\begin{equation}\label{eq:nuclear}
\frac\lambda2\left(\|\mU\|_F^2+\|\mV\|_F^2\right)\ge\min_{\tilde\mU\tilde\mV^T=\mU\mV^T}\frac\lambda2\left(\|\tilde\mU\|_F^2+\|\tilde\mV\|_F^2\right)=\lambda\|\mU\mV^T\|_\ast\vspace{-1mm}
\end{equation}
In fact, Figure~\ref{fig:sifd} shows that for weight-decay this upper bound is tight throughout the training of factorized ResNet20 across a variety of ranks and initializations, suggesting that the naive approach is indeed regularizing the nuclear norm rather than the Frobenius norm.
%While keeping this surrogate for rank small may be desirable for some purposes, in cases like model compression, where the factorized model is already low-rank, this may be a redundant regularization.

Since compression already constrains capacity, in the low-rank case one might favor just reducing regularization, e.g. multiplying $\lambda$ by the compression rate \citep{gray:19}.
However, Figure~\ref{fig:tuning} shows that this can lead to {\em worse} performance, and the approach still penalizes the nuclear norm.
{\em Frobenius decay} avoids this issue by simply penalizing the squared norm of the entire factorization:
\begin{Def}
	For $\lambda\ge0$ let $\frac\lambda2\|\mW\|_F^2$ be the contribution of an unfactorized layer parameterized by $\mW\in\R^{m\times n}$ to the penalty term.
	Then the {\bf Frobenius decay} (FD) penalty on matrices $\mU\in\R^{m\times r}$, $\mV\in\R^{n\times r}$, and $\mM_j\in\R^{r\times r}$ of the corresponding factorized layer is $\frac\lambda2\left\|\mU(\prod_{j=1}^d\mM_j)\mV^T\right\|_F^2$.
\end{Def}

By substituting the factorization directly, FD makes the least change to the problem:
rank-$r$ optima of the non-factorized objective will also minimize the factorized one.
We can also bound generalization error of the ReLU net from Section~\ref{ssec:fc} by a term $\tilde\gO\left(\sqrt{\frac m{|\gS|}\sum_{i=1}^L\|\mU_i(\prod_{j=1}^d\mM_{ij})\mV_i^T\|_F^2}\right)$ varying directly with the quantity penalized by FD (see Corollary~\ref{cor:frob}).
Notably, FD is a {\em stronger} penalty than the nuclear norm implicitly regularized by weight-decay, yet it still yields better models.

\vspace{-2mm}
\subsection{Initialization and Regularization in the Presence of Normalization}\label{subsec:initreg}
\vspace{-1mm}

\begin{figure}[!t]
	\centering
	\includegraphics[width=0.315\linewidth]{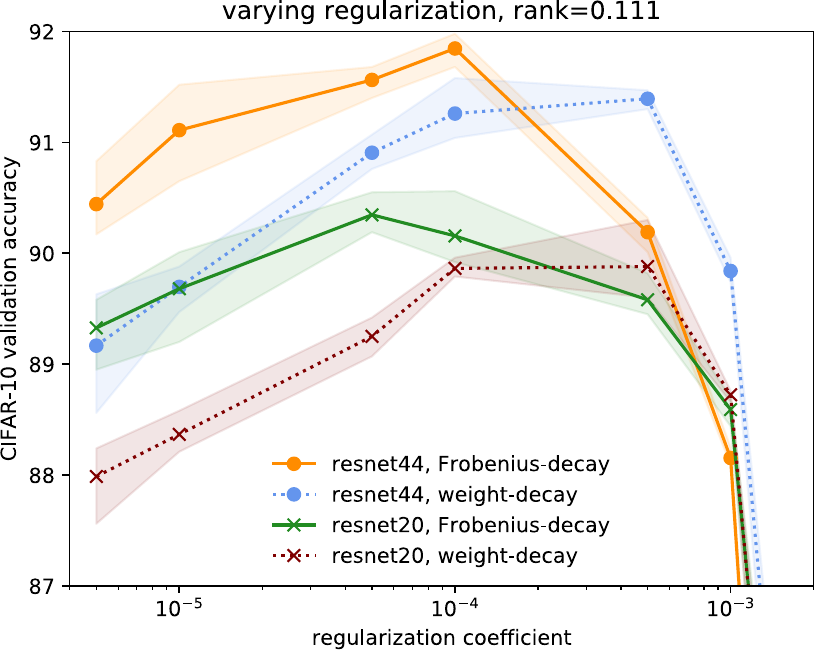}
	\hfill
	\includegraphics[height=0.25\linewidth]{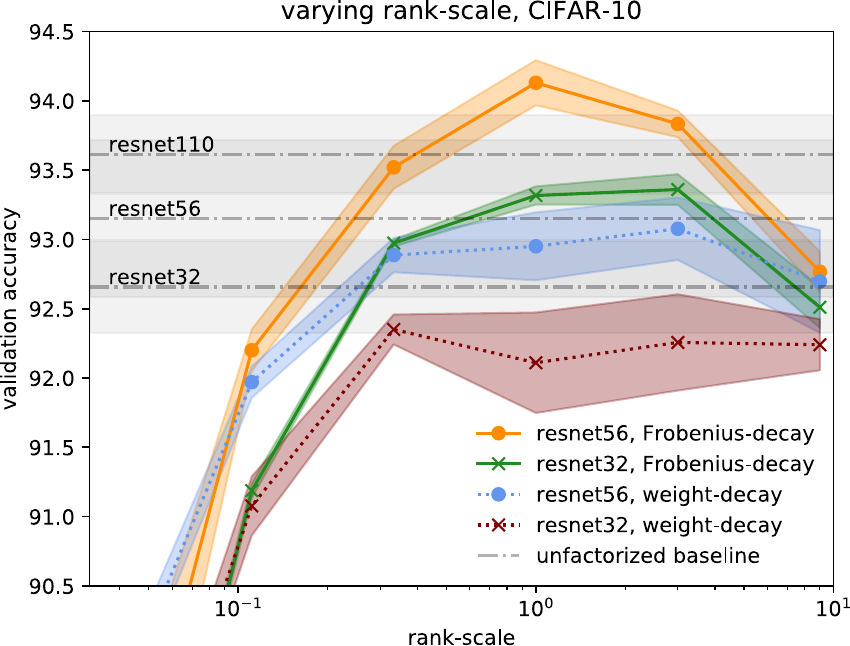}
	\includegraphics[height=0.25\linewidth]{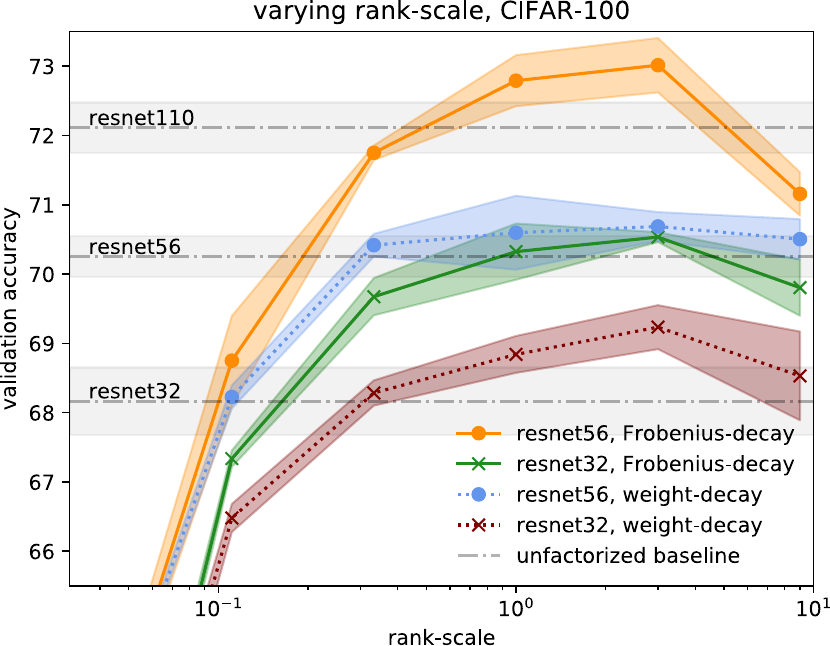}
	\vspace{-2mm}
	\caption{\label{fig:tuning}
		Comparison of weight-decay and FD at different regularization levels (left) and different rank-scales (center and right) when training factorized ResNets on CIFAR.
	}
\end{figure}

The use of spectral initialization and Frobenius decay is largely motivated by the norms of the recomposed matrices:
SI prevents them increasing feature vector norms across layers, while FD constrains model capacity via parameter norms.
However, normalization layers like BatchNorm \citep{ioffe:15} and others \citep{ba:16} largely negate the forward-pass and model-capacity effects of the norms of the weights parameterizing the layers they follow.
Thus for most modern models we need a different explanation for the effectiveness of SI and FD.

Despite the fact that most layers' norms do not affect inference or capacity, weight-decay remains useful for optimizing deep nets.
Recently, \citet{zhang:19} extended the analysis of \citet{hoffer:18} to argue that the {\em effective step-size} of the weight {\em direction} $\hat\mW$ is roughly $\eta/\|\mW\|_F^2$, where $\eta$ is the SGD step-size.
Thus by preventing the norm of $\mW$ from growing too large weight-decay maintains a large-enough effective step-size during training.
We draw on this analysis to explore the effect of SI and FD on factorized models.
For simplicity, we define a normalized layer to be one that does not depend on the scale of its parameter.
Ignoring stability offset terms, this definition roughly holds for normalized linear layers, convolutional layers in ResNets, and the Output-Value quadratic form in Transformers if the residual connection is added after rather than before normalization.

\begin{Def}
	A {\em normalized layer} $g(\mW,\vx)$ parameterized by $\mW\in\R^{m\times n}$ is one that satisfies $g(\mW,\vx)=g(\rho\mW,\vx)$ for all $\mW\in\R^{m\times n}$ and all positive scalars $\rho$.
\end{Def}

Because the output does not depend on the magnitude of $\mU\mV^T$, what matters is the direction of the composed matrix.
During an SGD step this direction is updated as follows (proof in Appendix~\ref{app:proof}):

\begin{Clm}\label{clm:efflr}
	At all steps $t\ge0$ let $g$ be a normalized layer of a differentiable model $f_{\theta_t}:\gX\mapsto\gY$ parameterized by $\mU_t\mV_t^T$ for $\mU_t\in\R^{m\times r},\mV_t^T\in\R^{n\times r}$, $r\ge1$.
	Suppose we update $\mP=\mU$ and $\mP=\mV$ by SGD, setting $\mP_{t+1}\gets\mP_t-\eta\nabla_{\mP_t}$ using a gradient $\nabla_{\mP_t}=\frac1{|\gB|}\sum_{(\vx,\vy)\in\gB}\nabla_{\mP_t}\ell(f_{\theta_t}(\vx),\vy)$ over batch $\gB\subset(\gX,\gY)$ and $\eta>0$ sufficiently small.
	Then for $\hat\nabla_t=\nabla_{\hat\mW_t}\mV_t\mV_t^T+\mU_t\mU_t^T\nabla_{\hat\mW_t}$ we have that the vectorized direction $\hat\vw_t=\VEC(\hat\mW_t)$ of $\mW_t=\mU_t\mV_t^T$ is updated as\vspace{-1mm}
	\begin{equation}
		\hat\vw_{t+1}\gets\hat\vw_t-\frac\eta{\|\mW_t\|_F^2}\left(\mI_{mn}-\hat\vw_t\hat\vw_t^T\right)\VEC(\hat\nabla_t)+\gO(\eta^2)\vspace{-1mm}
	\end{equation}
\end{Clm}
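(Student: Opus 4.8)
The plan is to track the recomposed matrix $\mW_t=\mU_t\mV_t^T$ through a single SGD step and then push the resulting first-order update through the normalization map $\vw\mapsto\vw/\|\vw\|$. There are three ingredients: the chain rule relating the gradients in $\mU_t,\mV_t$ to the gradient $\nabla_{\mW_t}$ of the batch loss with respect to the recomposed matrix; a first-order expansion of the product $\mU_{t+1}\mV_{t+1}^T$; and the scale-invariance of a normalized layer, which both relates $\nabla_{\mW_t}$ to the gradient $\nabla_{\hat\mW_t}$ at the normalized direction and, together with the normalization Jacobian, produces the projection and the $\|\mW_t\|_F^2$ denominator in the statement.

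First I would apply the chain rule $\nabla_{\mU_t}=\nabla_{\mW_t}\mV_t$ and $\nabla_{\mV_t}=\nabla_{\mW_t}^T\mU_t$, which follow from $\mW_t=\mU_t\mV_t^T$ by differentiating entrywise. Substituting the SGD updates $\mU_{t+1}=\mU_t-\eta\nabla_{\mU_t}$ and $\mV_{t+1}=\mV_t-\eta\nabla_{\mV_t}$ into the product and discarding the cross term $\eta^2\nabla_{\mU_t}\nabla_{\mV_t}^T$ gives
\[
\mW_{t+1}=\mW_t-\eta\left(\nabla_{\mW_t}\mV_t\mV_t^T+\mU_t\mU_t^T\nabla_{\mW_t}\right)+\gO(\eta^2),
\]
where I have used $\mU_t\nabla_{\mV_t}^T=\mU_t\mU_t^T\nabla_{\mW_t}$. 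Next, because $g$ is a normalized layer, the batch loss $L(\mW):=\frac1{|\gB|}\sum_{(\vx,\vy)\in\gB}\ell(f_{\theta_t}(\vx),\vy)$, viewed as a function of the layer matrix with all other parameters frozen, is homogeneous of degree $0$ in $\mW$. Hence its gradient is homogeneous of degree $-1$, so $\nabla_{\mW_t}=\tfrac1{\|\mW_t\|_F}\nabla_{\hat\mW_t}$, where $\hat\mW_t=\mW_t/\|\mW_t\|_F$ and $\nabla_{\hat\mW_t}$ denotes $\nabla L$ evaluated at the unit-norm direction $\hat\mW_t$. Plugging this in collapses the bracket to $\tfrac1{\|\mW_t\|_F}\hat\nabla_t$, yielding $\mW_{t+1}=\mW_t-\tfrac{\eta}{\|\mW_t\|_F}\hat\nabla_t+\gO(\eta^2)$.

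Finally I would Taylor-expand the normalization $\hat\vw_{t+1}=\vw_{t+1}/\|\vw_{t+1}\|$ about $\vw_t$. The Jacobian of $\vw\mapsto\vw/\|\vw\|$ is $\tfrac1{\|\vw\|}(\mI_{mn}-\hat\vw\hat\vw^T)$, so writing the update in vectorized form as $\vw_{t+1}=\vw_t-\tfrac{\eta}{\|\mW_t\|_F}\VEC(\hat\nabla_t)+\gO(\eta^2)$ and applying this Jacobian at $\vw_t$ (with $\|\vw_t\|=\|\mW_t\|_F$) produces a second factor of $\|\mW_t\|_F$ and gives
\[
\hat\vw_{t+1}=\hat\vw_t-\frac{\eta}{\|\mW_t\|_F^2}\left(\mI_{mn}-\hat\vw_t\hat\vw_t^T\right)\VEC(\hat\nabla_t)+\gO(\eta^2),
\]
as claimed; the two appearances of $\|\mW_t\|_F$—one from the degree-$(-1)$ homogeneity of the gradient, one from the normalization Jacobian—combine into the $\|\mW_t\|_F^2$ denominator.

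The main obstacle is not conceptual but a matter of controlling the remainder: I must verify that the discarded terms are genuinely $\gO(\eta^2)$, which is exactly where the hypothesis that $\eta$ is sufficiently small enters. It ensures $\|\mW_{t+1}\|_F$ stays bounded away from $0$ along the step, so that the normalization map is smooth on the segment from $\vw_t$ to $\vw_{t+1}$ and its second-order Taylor remainder is $\gO(\eta^2)$. A secondary point to state carefully is the precise meaning of $\nabla_{\hat\mW_t}$ and the bookkeeping of the two $\|\mW_t\|_F$ factors noted above. Note that the projection $\mI_{mn}-\hat\vw_t\hat\vw_t^T$ is supplied entirely by the normalization Jacobian—its role is to keep the direction update tangent to the unit sphere—and should not be expected to act as the identity on $\VEC(\hat\nabla_t)$, since $\hat\nabla_t$ carries the extra $\mV_t\mV_t^T$ and $\mU_t\mU_t^T$ factors and need not be Frobenius-orthogonal to $\hat\mW_t$.
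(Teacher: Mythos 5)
Your proposal is correct and follows essentially the same route as the paper's proof: the chain rule $\nabla_{\mU_t}=\nabla_{\mW_t}\mV_t$, $\nabla_{\mV_t}=\nabla_{\mW_t}^T\mU_t$, a first-order expansion of the product $\mU_{t+1}\mV_{t+1}^T$, the scale-invariance identity $\rho_t\nabla_{\mW_t}=\nabla_{\hat\mW_t}$, and a first-order expansion of the normalization. The only cosmetic difference is that you package the final step as the Jacobian of $\vw\mapsto\vw/\|\vw\|$, whereas the paper explicitly expands $\rho_{t+1}=\rho_t-\frac{\eta}{\rho_t}\Tr(\hat\mW_t^T\hat\nabla_t)+\gO(\eta^2)$ and divides; these are the same computation, and your closing remark that the projection term genuinely matters (since $\hat\nabla_t$ need not be orthogonal to $\hat\vw_t$) is also correct.
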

Note that (1) we ignore decay because $\lambda=\gO(\eta)$ so any resulting term is $\gO(\eta^2)$ and (2) the update rule is almost the same as that obtained for the unfactorized case by \citet{zhang:19}, except they have $\hat\nabla_t$ as the true gradient of the direction.
\update{
	Thus, apart from a rank-one correction, $\hat\mW_t$ is approximately updated with step-size $\eta/\|\mW_t\|_F^2$ multiplying a sum $\hat\nabla_t$ of two positive semi-definite linear transformations of its gradient $\nabla_{\hat\mW_t}$.
	Note that, in the special case where the factorization is full-rank, this means that $\hat\nabla_t$ is close to a scalar multiple of $\nabla_{\hat\mW_t}$ if the singular values are very similar.
	If $\mW$ is a Gaussian ensemble with scale $1/\sqrt n$, which roughly aligns with common initialization schemes \citep{he:15}, then its singular values are roughly distributed around 1 and supported on $[0,2]$~\citep{bai:93}, and so the linear transforms will not significantly increase the magnitude of $\nabla_{\hat\mW_t}$.
	In the following discussion, we treat $\left(\mI_{mn}-\hat\vw_t\hat\vw_t^T\right)\VEC(\hat\nabla_t)$ as the effective gradient update for the refactorized parameter $\hat\vw$.
}
%Thus, apart from a rank-one correction, $\hat\mW_t$ is approximately updated with step-size $\eta/\|\mW_t\|_F^2$ multiplying a linear transformation of its gradient.
%To understand the nature of this transformation, note that at spectral initialization we have that $\mV_0\mV_0^T=\mU_0\mU_0^T=\Sigma_r$ are diagonal matrices of singular values of the full-rank initialization $\mW$;
%furthermore, if $\mW$ is a Gaussian ensemble with scale $1/\sqrt n$, which roughly aligns with common initialization schemes \citep{he:15}, then its singular values are roughly distributed around 1 and supported on $[0,2]$ \citep{bai:93}.
%Since $\hat\nabla_t=\nabla_{\hat\mW_t}\mV_t\mV_t^T+\mU_t\mU_t^T\nabla_{\hat\mW_t}$, this suggests that, at spectral initialization, an effective learning rate of $\eta/\|\mW_0\|_F^2$ is a reasonable approximation for the factorized update.
%This points to the role of SI being to initialize the factorization at an appropriate scale and perhaps also to make the first update more aligned with the gradient w.r.t. $\hat\mW_0$.

As in the unfactorized case, our analysis suggests that, the main role of decay may be to maintain a large effective learning rate $\eta/\|\mW\|_F^2$;
furthermore, FD may be more effective than regular decay because it provides stronger regularization and directly penalizes the quantity of interest.
We support this hypothesis using experiments analogous to \citet{zhang:19} by comparing training a low-rank ResNet-20 with FD to training it with no decay but at each step normalizing all BatchNormed layers to have the same Frobenius norm as the FD run.
In Figure~\ref{fig:norm} we see that the latter scheme closely tracks the former in terms of both the training loss and the test accuracy.
Figure~\ref{fig:norm} also shows that FD maintains a higher effective step-size than regular weight-decay throughout training.

\begin{figure}[!t]
	\centering
	\includegraphics[width=0.33\linewidth]{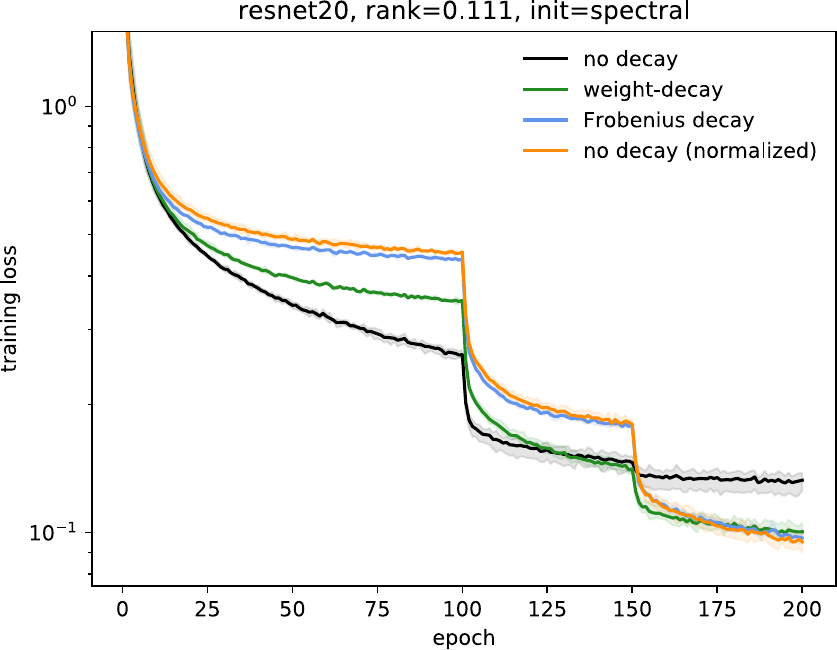}
	\includegraphics[width=0.33\linewidth]{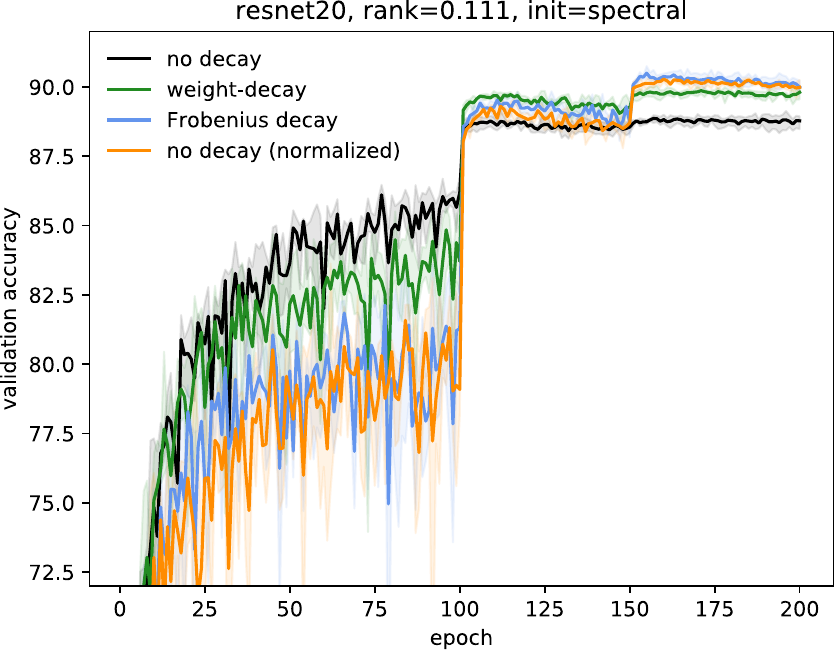}
	\hfill
	\includegraphics[width=0.32\linewidth]{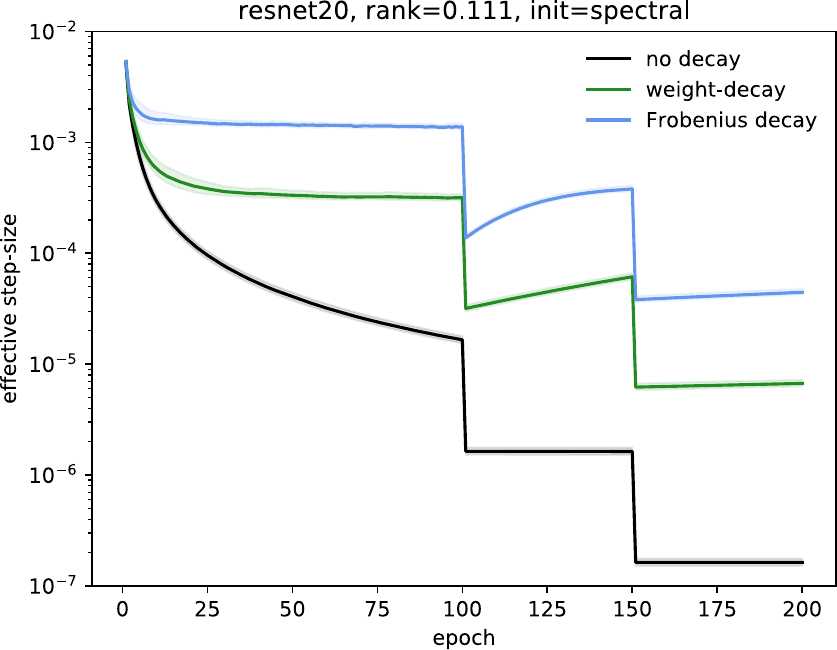}
	\vspace{-2mm}
	\caption{\label{fig:norm}
		Traces depicting training of low-rank ResNet-20 with different decay settings;
		``no decay (normalized)" normalizes matrix factors after each step to have the same norm as ``Frobenius decay."
		The effective step size (right) is the average over convolution layers $i$ of $\eta/\|\mU_i\mV_i^T\|_F^2$.
	}
\end{figure}
% !TEX root = main.tex

\vspace{-1mm}
\section{Compressed Model Training: Low-Rank, Sparse, and Tensorial}
\vspace{-1mm}

\begin{table}[!t]
	\centering
	\scriptsize
	\begin{threeparttable}
		\caption{
			\label{tab:lottery}
			Comparison of low-rank and sparse training in a common evaluation setting for ``ticket-guessing" \citep{wang:20a,su:20}.
			Best results overall are {\em italicized};
			best results from full low-memory training are {\bf bolded}.
			For complete results and deviations see Tables~\ref{tab:vgg19full} and~\ref{tab:resnet32full}.\vspace{-3mm}
		}
		\begin{tabular}{clccccccccc}
			\toprule
			& {\bf VGG19$^\textrm{NFC}$} (full model)$^\star$ & \multicolumn{3}{c}{{\bf CIFAR-10} (93.75$\pm$0.48)} & \multicolumn{3}{c}{{\bf CIFAR-100} (73.42$\pm$0.17)} & \multicolumn{3}{c}{{\bf Tiny-ImageNet} (62.19$\pm$0.74)} \\
			Regime & Compressed (from $\approx$20M) & 0.10 & 0.05 & 0.02 & 0.10 & 0.05 & 0.02 & 0.10 & 0.05 & 0.02 \\
			\hline
			Regular & Pruning$^\ast$ & 93.83 & 93.69 & 93.49 & 73.83 & 73.07 & {\em 71.69} & 61.21 & 60.49 & 55.55 \\
			Training & Lottery$^\dagger$ (w. hybrid/rewind) & {\em 94.14} & {\em 93.99} & {\em 93.52} & {\em 74.06} & {\em 73.10} & 71.61 & 61.19 & {\em 60.57} & 56.18 \\
			\hline
			\multirow{6}{*}{\shortstack{Low\\Memory\\Training}} 
			& Dynamic Sparsity$^\ddagger$ & 93.75 & {\bf 93.86} & {\bf 93.13} & 72.36 & {\bf 71.98} & {\bf 70.70} & {\bf\em 62.49} & {\bf 59.81} & {\bf\em 58.36} \\
			& Fixed Sparsity$^\hash$ & {\bf 93.77} & 93.43 & 92.45 & {\bf 72.84} & 71.83 & 68.98 & 61.02 & 59.50 & 57.28 \\
			& Low-Rank$^\star$ & 90.71 & 88.08 & 80.81 & 62.32 & 54.97 & 42.80 & 48.50 & 46.98 & 36.57 \\
			& Low-Rank (SI)$^\star$ & 90.99 & 88.61 & 82.53 & 63.77 & 58.51 & 45.02 & 49.87 & 46.73 & 37.25 \\
			& Low-Rank (FD)$^\star$ & 91.57 & 88.16 & 82.81 & 64.07 & 59.34 & 45.73 & 53.93 & 47.95 & 35.63 \\
			& Low-Rank (SI \& FD)$^\star$ & 91.58 & 88.98 & 83.27 & 65.61 & 60.1 & 46.43 & 53.53 & 47.60 & 35.68 \\
			\midrule
			& {\bf ResNet32$^\textrm{x2}$} (full model)$^\star$ & \multicolumn{3}{c}{{\bf CIFAR-10} (94.49$\pm$0.29)} & \multicolumn{3}{c}{{\bf CIFAR-100} (75.41$\pm$1.26)} & \multicolumn{3}{c}{{\bf Tiny-ImageNet} (63.02$\pm$0.94)} \\
			Regime & Compressed (from $\approx$7.5M) & 0.10 & 0.05 & 0.02 & 0.10 & 0.05 & 0.02 & 0.15 & 0.10 & 0.05 \\
			\hline
			Regular & Pruning$^\ast$ & 94.21 & {\em 93.29} & 90.31 & 72.34 & 68.73 & 60.65 & 58.86 & 57.62 & 51.70 \\
			Training & Lottery$^\dagger$ (w. hybrid/rewind) & 94.14 & {\em 93.02} & {\em 90.85} & 72.41 & 69.28 & {\em 63.44} & 60.31 & 57.77 & 51.21\\
			\hline
			\multirow{6}{*}{\shortstack{Low\\Memory\\Training}} 
			& Dynamic Sparsity$^\ddagger$ & 92.97 & 91.61 & 88.46 & 69.66 & 68.20 & {\bf 62.25} & 57.08 & 57.19 & {\bf\em 56.18} \\
			& Fixed Sparsity$^\hash$ & 92.97 & 91.60 & {\bf 89.10} & 69.70 & 66.82 & 60.11 & 57.25 & 55.53 & 51.41 \\
			& Low-Rank$^\star$ & 93.59 & 92.45 & 87.95 & 72.71 & 67.86 & 61.08 & 60.82 & 58.72 & 55.39 \\
			& Low-Rank (SI)$^\star$ & 92.52 & 91.72 & 87.36 & 71.71 & 67.41 & 60.79 & 59.53 & 57.60 & 55.00 \\
			& Low-Rank (FD)$^\star$ & 92.92 & 89.44 & 83.05 & 71.85 & 66.89 & 55.31 & 59.45 & 57.24 & 54.15 \\
			& Low-Rank (SI \& FD)$^\star$ & {\bf\em 94.34} & {\bf 92.90} & 87.97 & {\bf\em 74.41} & {\bf\em 70.22} & 61.40 & {\bf\em 62.24} & {\bf\em 60.25} & 55.97 \\
			\bottomrule
		\end{tabular}
		\begin{tablenotes}[leftmargin=*]\setlength\itemsep{-1mm}
			\item[$^\ast$] Best of \citep{lecun:90,zeng:19}, obtained from \citet{wang:20a}.
			\item[$^\dagger$] Best of \citep{frankle:19,renda:20,su:20}, obtained from \citet{wang:20a} and \citet{su:20}.
			\item[$^\ddagger$] Best of \citep{mostafa:19,mocanu:18,bellec:18}, obtained from \citet{wang:20a}.
			\item[$^\hash$] Best of \citep{lee:18,wang:20a,su:20}, obtained from \citet{wang:20a} and \citet{su:20}.
			\item[$^\star$] Our method or our reproduction; results averaged over three random trials.
		\end{tablenotes}
	\end{threeparttable}
\end{table}

We first study SI and FD for training factorized models with low-rank convolutional layers, comparing against the dominant approaches to low-memory training:
sparse layers and tensor decomposition.
For direct comparison we evaluate models with near-identical parameter counts;
note that by normalizing by memory we {\em disadvantage} the low-rank approach, which often has a comparative advantage for speed.
All models are trained for 200 epochs with the same optimizer settings as for the unfactorized models;
the weight-decay coefficient is left unchanged when replacing by FD.

\vspace{-2mm}
\paragraph{Low-Rank and Sparse Training:}

We train modified ResNet32 and VGG19 used in the lottery-ticket-guessing literature \citep{wang:20a,su:20}.
Motivated by \citet{frankle:19}, such methods fix a sparsity pattern at initialization and train only the unpruned weights.
While they achieve high parameter savings, their computational advantages over full models are less clear, as software and accelerators often do not efficiently implement arbitrary sparsity \citep{paszke:19,nvidia:20}.
In contrast, we do see acceleration via low-rank convolutions, almost halving the time of ResNet's forward and backward pass at the highest compression.
For completeness, we also show methods that vary the sparsity pattern (dynamic), prune trained models (pruning), and prune trained models and retrain (lottery);
note that the latter two require training an uncompressed model.

In Table~\ref{tab:lottery} we see that the low-rank approach, with SI \& FD, dominates at the higher memory settings of ResNet across all three datasets considered, often outperforming even approaches that train an uncompressed model first.
It is also close to the best compressed training approach in the lowest memory setting for CIFAR-100 \citep{krizhevsky:09} and Tiny-ImageNet \citep{deng:09}.
On the other hand, the low-rank approach is substantially worse for VGG;
nevertheless, ResNet is both smaller and more accurate, so if the goal is an accurate compressed model learned from scratch then one should prefer the low-rank approach.
Our results demonstrate a strong, simple baseline not frequently compared to in the low-memory training literature \citep{frankle:19,wang:20a,su:20}.
In fact, since it preserves the top singular components of the original weights, SI can itself be considered a type of (spectral) magnitude pruning.
Finally, Table~\ref{tab:lottery} highlights the complementary nature of SI and FD, which outperform regular low-rank training on both models, although interestingly they consistently {\em decrease} ResNet performance when used separately.

\vspace{-2mm}
\paragraph{Matrix and Tensor Decomposition:}

\begin{table}[!t]
	\centering
	\scriptsize
	\begin{threeparttable}
		\caption{\label{tab:tensor}
			Comparison of low-rank and tensor-decomposition training of WideResNet28-10 on CIFAR-10 (mean of 3 trials) with different regularization and initialization.
			Best errors {\bf bolded}.\vspace{-3mm}
		}
		\begin{tabular}{clcccccc}
			\toprule
			compression & decomposition$^\ast$ & WD$^\dagger$ & WD$^\dagger$ \& SI & CRS$^\ddagger$ & CRS$^\ddagger$ \& SI & FD & FD \& SI \\
			\midrule
			none ($\approx$36.5M param.) & & 3.21$\pm$0.22 \\
			\midrule
			0.01667 & Low-Rank & 7.62$\pm$0.08 & 7.72$\pm$0.11 & 8.63$\pm$0.87 & 8.83$\pm$0.17 & 7.23$\pm$0.32 & 7.26$\pm$0.45 \\
			($\approx$0.61M param.) & Tensor-Train & 8.55$\pm$4.22 & 7.72$\pm$0.46 & 7.01$\pm$0.35 & 6.52$\pm$0.17 & 7.26$\pm$0.73 & {\bf 6.22$\pm$0.50} \\
			\midrule
			0.06667 & Low-Rank & 5.31$\pm$0.06 & 5.49$\pm$0.42 & 5.86$\pm$0.22 & 5.53$\pm$0.72 & 3.89$\pm$0.17 & {\bf 3.86$\pm$0.22} \\
			($\approx$2.4M param.) & Tensor-Train & 7.33$\pm$0.27 & 6.49$\pm$0.30 & 5.20$\pm$0.37 & 4.79$\pm$0.15 & 5.02$\pm$0.15 & 5.10$\pm$0.35 \\
			\bottomrule
		\end{tabular}
		\begin{tablenotes}[leftmargin=*]\setlength\itemsep{-1mm}
			\item[$^\ast$] See Table~\ref{tab:tensorfull} for results using the Tucker decomposition, which generally performs worse than Tensor-Train at the same compression.
			\item[$^\dagger$] Regular weight-decay with $\lambda=0.005$.
			\item[$^\ddagger$] Regular weight-decay with coefficient scaled by the compression rate \citep{gray:19}.\vspace{-1mm}
		\end{tablenotes}
	\end{threeparttable}
\end{table}

We next compare against tensor decompositions, another common approach to small model training \citep{kossaifi:20b,kossaifi:20a}.
A recent evaluation of tensors and other related approaches by \citet{gray:19} found that Tensor-Train decomposition \citep{oseledets:11} obtained the best memory-accuracy trade-off on WideResNet;
we thus compare directly to this approach.
Note that, while we normalize according to memory, Tensor-Train must be expanded to the full tensor prior to convolution and thus increases the required compute, unlike low-rank factorization.
In Table~\ref{tab:tensor} we show that at 6.7\% of the original parameters the low-rank approach with FD and SI significantly outperforms Tensor-Train.
Tensor-Train excels at the highly compressed 1.7\% setting but is greatly improved by leveraging tensor analogs of SI---decomposing a random initialization rather than randomly initializing tensor cores directly---and FD---penalizing the squared Frobenius norm of the full tensor.
We also compare to CRS \citep{gray:19}, which scales regular weight-decay by the compression rate.
It is roughly as beneficial as FD across different evaluations of Tensor-Train, but FD is significantly better for low-rank factorized neural layers.
% !TEX root = main.tex

\vspace{-1mm}
\section{Overcomplete Knowledge Distillation}
\vspace{-1mm}

\begin{table}[!t]
	\centering
	\scriptsize
	\begin{threeparttable}
		\caption{\label{tab:distill}
			Overcomplete ResNet performance (mean of 3 trials).
			Best at each depth is {\bf bolded};
			cases where we match the next deeper network with around the same {\em training} memory are \underline{underlined}.\vspace{-3mm}
		}
		\begin{tabular}{cclccccccccc}
			\toprule
			&ResNet&& unfactorized & DO-Conv$^\ast$ & \multicolumn{2}{c}{full} & \multicolumn{2}{c}{deep} & \multicolumn{2}{c}{wide} \\
			data & depth & metric & WD & WD & WD & FD & WD & FD & WD & FD \\
			\midrule
			\multirow{9}{*}{\shortstack{CIFAR\\10}} 
			& 	& test accuracy (\%) 	& 92.66 & N/A &	92.11 &	\underline{93.32} &	91.37 &	93.11 &	92.26 &	{\bf 93.36} \\
			&32	& training param. (M) 	& 0.46 & N/A &	0.95 &	0.95 &	1.43 &	1.43 &	2.84 &	2.84 \\
			&	& testing param. (M)	& 0.46 & 0.46 & 0.46 & 0.46 & 0.46 & 0.46 & 0.46 & 0.46 \\
			\cline{2-11}
			&			& test accuracy (\%) 	& 93.15 & 93.38 &	92.95 &	\underline{\bf 94.13} &	92.52 &	93.89 &	93.08 &	93.83 \\
			& 56	& training param. (M) 	& 0.85 & N/A &	1.72 &	1.72 &	2.59 &	2.59 &	5.16 &	5.16 \\
			&		& testing param. (M)	& 0.85 & 0.85 & 0.85 & 0.85 & 0.85 & 0.85 & 0.85 & 0.85 \\
			\cline{2-11}
			&	& test accuracy (\%) 	& 93.61 & 93.93 &	93.73 &	94.28 &	93.25 &	{\bf 94.42} &	93.53 &	94.13 \\
			&110	& training param. (M) 	& 1.73 & N/A &	3.47 &	3.47 &	5.21 &	5.21 &	10.39 &	10.39 \\
			&	& testing param. (M)	& 1.73 & 1.73 & 1.73 & 1.73 & 1.73 & 1.73 & 1.73 & 1.73 \\
			\midrule
			\multirow{9}{*}{\shortstack{CIFAR\\100}} 
			&	& test accuracy (\%) 	& 68.17 & N/A &	68.84 &	\underline{70.32} &	67.84 &	70.25 &	69.23 &	{\bf 70.53} \\
			&32	& training param. (M) 	& 0.47 & N/A &	0.95 &	0.95 &	1.44 &	1.44 &	2.84 &	2.84 \\
			&	& testing param. (M)	& 0.46 & 0.46 & 0.46 & 0.46 & 0.46 & 0.46 & 0.46 & 0.46 \\
			\cline{2-11}
			&			& test accuracy (\%) 	& 70.25 & 70.78 &	70.6 &	\underline{72.79} &	69.62 &	72.28 &	70.69 &	{\bf 73.01} \\
			& 56	& training param. (M) 	& 0.86 & N/A &	1.73 &	1.73 &	2.60 &	2.60 &	5.17 &	5.17 \\
			&		& testing param. (M)	& 0.86 & 0.86 & 0.86 & 0.86 & 0.86 & 0.86 & 0.86 & 0.86 \\
			\cline{2-11}
			&	& test accuracy (\%) 	& 72.11 & 72.22 &	72.33 &	73.98 &	71.28 &	{\bf 74.17} &	71.7 &	73.69 \\
			&110	& training param. (M) 	& 1.73 & N/A &	3.48 &	3.48 &	5.22 &	5.22 &	10.40 &	10.40 \\
			&	& testing param. (M)	& 1.73 & 1.73 & 1.73 & 1.73 & 1.73 & 1.73 & 1.73 & 1.73 \\
			\hline
		\end{tabular}
			\begin{tablenotes}[leftmargin=*]\setlength\itemsep{-1mm}
				\item[$^\ast$] Validation accuracies from \citet{cao:20} are averages over the last five epochs across five training runs.
			\end{tablenotes}
%		\begin{tabular}{clcccccccccccc}
%			\toprule
%			&& \multicolumn{6}{c}{\bf CIFAR-10} && \multicolumn{4}{c}{\bf CIFAR-100} \\
%			& factorization setting & base & \multicolumn{2}{c}{full} & \multicolumn{2}{c}{deep} & wide && base & full & deep & wide \\
%			depth & decay ($\lambda=0.0001$) & WD & WD & FD & WD & FD & FD && WD & FD & FD & FD \\
%			\midrule
%			{\bf 110} & test accuracy (\%) & \multicolumn{6}{l}{93.71} && \multicolumn{4}{l}{71.93} \\
%			&training/test param. (M) & \multicolumn{6}{l}{1.73} && \multicolumn{4}{l}{1.73} \\
%			\midrule
%			& test accuracy (\%) & 92.7 & 92.45 & \underline{93.68} & 91.65 & 93.58 & {\bf 93.89} && 70.15 & \underline{72.18} & 72.09 & {\bf 72.61} \\
%			{\bf 56} &training param. (M) & 0.85 & 1.72 & \underline{1.72} & 2.59 & 2.59 & 5.16 && 0.86 &
%			 \underline{1.73} & 2.60 & 5.17 \\
%			 &test param. (M) & 0.85 & 0.85 & 0.85 & 0.85 & 0.85 & 0.85 && 0.86 & 0.86 & 0.86 & 0.86 \\
%			\midrule
%			 & test accuracy (\%) & 92.55 & 92.32 & \underline{93.31} & 91.11 & 92.99 & {\bf 93.35} && 68.02 & \underline{\bf 70.59} & 70.29 & 70.51 \\
%			{\bf 32} &training param. (M) & 0.46 & 0.95 & \underline{0.95} & 1.43 & 1.43 & 2.84 && 0.47 & \underline{0.95} & 1.44 & 2.84 \\
%			&test param. (M) & 0.46 & 0.46 & 0.46 & 0.46 & 0.46 & 0.46 && 0.47 & 0.47 & 0.47 & 0.47 \\
%			\bottomrule
%		\end{tabular}
%		\begin{tablenotes}[leftmargin=*]\setlength\itemsep{-1mm}
%		\end{tablenotes}
	\end{threeparttable}
	\vspace{-1mm}
\end{table}

In contrast to compressed model training, in {\em knowledge distillation} (KD) we have the capacity to train a large model but want to deploy a small one.
We study what we call {\em overcomplete} KD, in which a network is over-parameterized in such a way that an equivalent small model can be directly recovered.
Similar approaches have been previously studied only with small models \citep{arora:18b,guo:20} or using convolution-specific methods \citep{ding:19,cao:20}.
We take a simple factorization approach in which we decompose weights $\mW\in\R^{m\times n}$ to be products of 2 or 3 matrices while {\em increasing} the parameter count, either via depth or width.
Here we consider three cases:
the {\em full} (rank) setting where $\mW=\mU\mV^T$ for $\mU\in\R^{m\times m},\mV\in\R^{n\times m}$, the {\em deep} setting where $\mW=\mU\mM\mV^T$ for $\mU,\mM\in\R^{m\times m}$, $\mV\in\R^{n\times m}$, and the {\em wide} setting where $\mW=\mU\mV^T$ for $\mU\in\R^{m\times 3m}$, $\mV\in\R^{n\times 3m}$.
As before, we factorize all but the first and last layer and train factorized networks using the same routine as for the base model, except when replacing weight-decay by FD.
We do not study SI here, using the default initialization for $\mU,\mV^T$ and setting $\mM=\mI_m$.

Table~\ref{tab:distill} shows the strength of this approach:
we can train ResNet32 to beat ResNet56 and ResNet56 to beat ResNet110 while needing about the same number of parameters during training and only half as many at inference.
Note that training ResNet56 using the ``full" setting is also 1.5x faster than training ResNet110 (see Table~\ref{tab:distillfull} for timings).
Furthermore, we improve substantially upon DO-Conv \citep{cao:20}, which obtains much smaller improvements over the unfactorized baseline.
In fact, Table~\ref{tab:distillcomp} suggests our approach compares favorably even with regular KD methods;
combined with its simplicity and efficiency this suggests that our overcomplete method can be considered as a baseline in the field.
Finally, we also show that Frobenius decay is critical:
without it, overcomplete KD performs worse than regular training.
A detailed visualization of this, comparing the CIFAR performance of factorized ResNets across a range of rank-scale settings covering both the current high-rank (distillation) case and the previous low-rank (compression) case, can be found in Figure~\ref{fig:tuning}.
% !TEX root = main.tex

\vspace{-1mm}
\section{Multi-Head Attention as Factorized Quadratic Forms}
\vspace{-1mm}

Our final setting is multi-head attention (Transformer) architectures \citep{vaswani:17}.
As discussed in Section~\ref{sec:decomp}, the MHA component is already a factorized layer, consisting of an aggregation over the output of two quadratic forms per head:
the ``Query-Key" (QK) forms passed into the softmax and the ``Output-Value" (OV) forms that multiply the resulting attention scores (c.f. Equation~\ref{eq:mha}).
Transformers also contain large linear layers, which can also be factorized for efficiency.

\vspace{-2mm}
\paragraph{Improving Transformer Training and Compression:}
We start with the original Transformer architecture on the IWSLT-14 translation task \citep{cettolo:14} but use an SGD-based training routine as a baseline \citep{gehring:17}.
As there is no weight-decay by default, we first tune both this and FD on the non-factorized model;
here FD is applied to the {\em implicitly} factorized MHA layers.
In Figure~\ref{fig:transformer} we show that this alone yields an improvement:
whereas the effect of weight-decay is either negligible or negative, tuning FD does improve the BLEU score.
Furthermore, we see that tuning just the OV form in MHA is more robust at higher regularization levels than tuning both OV and QK.
We conclude by examining both SI and FD when reducing the number of parameters by (1) factorizing all linear and embedding layers and (2) scaling down the embedding dimension in MHA.
In Figure~\ref{fig:transformer} we see that the benefit of Frobenius decay disappears when compressing;
on the other hand, SI provides a strong boost under both types of decay, and is in fact necessary for FD to work at all.
Note that the major effect here is for the factorized linear layers---we found that SI has minimal effect when applied to MHA, likely because those initializations have already been tuned.

\vspace{-2mm}
\paragraph{FLAMB\'e for Unsupervised BERT Pre-Training:}
Lastly, we examine BERT \citep{devlin:19}, a large transformer trained on a massive unsupervised text corpus and evaluated on downstream language tasks.
The state-of-the-art training approach is via the LAMB optimizer \citep{you:20} using weight-decay based on the AdamW algorithm of \citet{loshchilov:19}, in which $\lambda\eta$ times each parameter is subtracted from itself;
this is equivalent to $\ell_2$-regularization for SGD but not for adaptive methods.
We can define a similar Frobenius alternative by subtracting $\lambda\eta$ times the Frobenius gradients $\mU\mV^T\mV=\nabla_\mU\frac12\|\mU\mV^T\|_F^2$ and $\mV\mU^TU=\nabla_\mV\frac12\|\mU\mV^T\|_F^2$ from $\mU$ and $\mV$, respectively;
when used with the LAMB optimizer we call this method FLAMB\'e.
We see in Table~\ref{tab:bert} that FLAMB\'e outperforms the simple FD modification of LAMB and, as with IWSLT, leads to an improvement in downstream task performance without changing model size.
For BERT, however, applying FD via FLAMB\'e also leads to better downstream performance when scaling down the MHA embedding dimension by half.
Besides achieving a better compressed model, the success of FLAMB\'e also shows the potential of new types of decay schemes for adaptive methods.

\begin{figure}[!t]
\begin{minipage}{0.4\linewidth}
	\centering
	\includegraphics[width=0.75\linewidth]{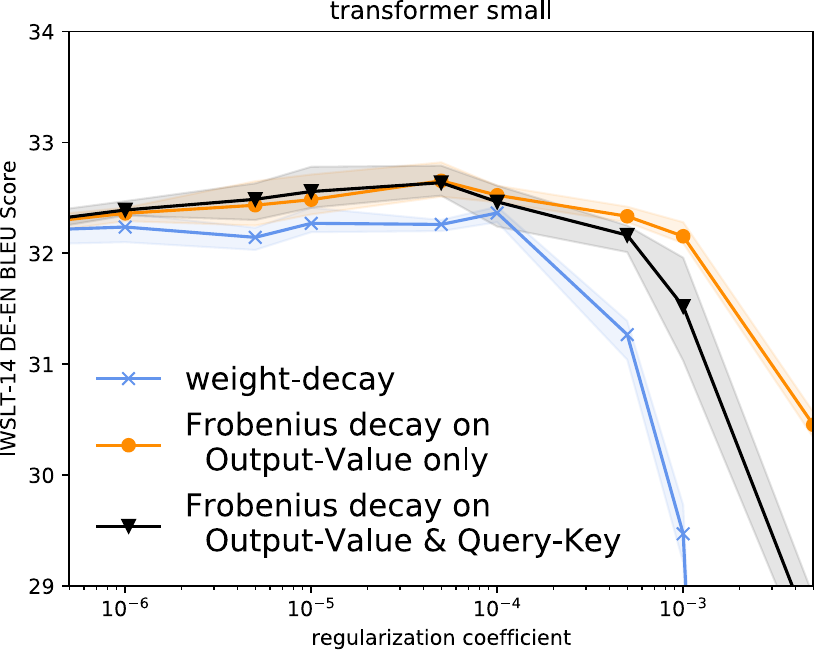}
	\includegraphics[width=0.75\linewidth]{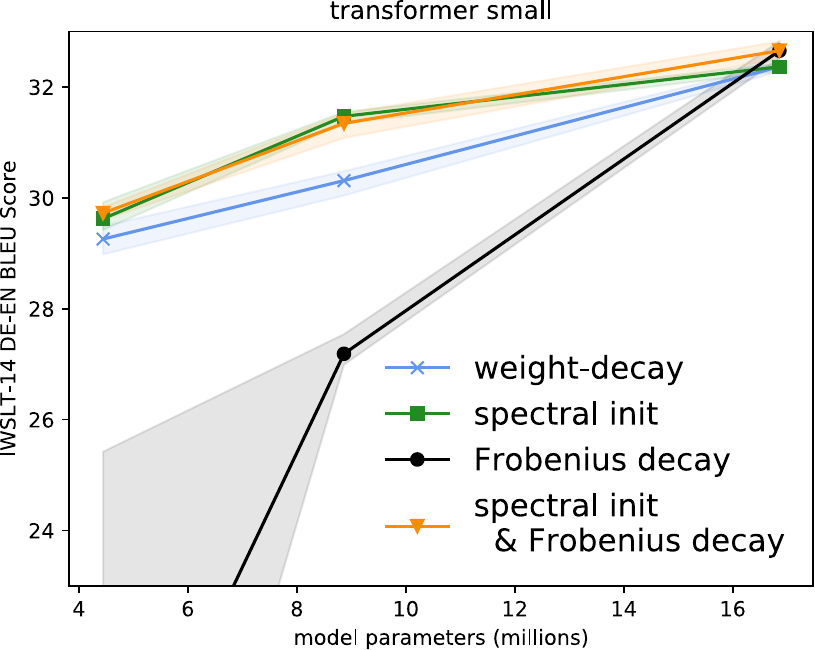}
	\vspace{-2mm}
	\captionof{figure}{\label{fig:transformer}
		Transformer performance on IWSLT-14 as a function of regularization (top) and compression (bottom).
	}
\end{minipage}
\begin{minipage}{0.59125\linewidth}
	\scriptsize
	\hfill
	\begin{threeparttable}
		\captionof{table}{\label{tab:bert}
			Comparison of BERT unsupervised pretraining using LAMB \citep{you:20} and our Frobenius decay scheme FLAMB\'e.
			Evaluation is conducted by fine-tuning the final model on the SQuAD question-answering task \citep{rajpurkar:16}.
			Guided by IWSLT results, FLAMB\'e is applied only to the Output-Value form in MHA;
			regular weight-decay is used on all other parameters.
			Spectral initialization of MHA is used for FLAMB\'e as well, but we find the effect minimal.
			Regularization coefficients for both methods were obtained by tuning on the uncompressed model, targeting the unsupervised loss.\vspace{-3mm}
		}
		\begin{tabular}{clcccc}
			\toprule
			& & compression & param. &  \\
			& & rate & count & \multicolumn{2}{c}{SQuAD} \\
			model & optimizer & (MHA-only) & (M) & F1 & EM \\
			\midrule
			\multirow{5}{*}{\shortstack{BERT\\base}} 
			& LAMB & 1.0 & 110 & 88.55 & 81.06\\
			& LAMB (SI \& FD) & 1.0 & 110 & 88.16 & 80.86 \\
			&  FLAMB\'e (SI \& FD)  & 1.0 & 110 & 88.88 & 81.54\\
			& LAMB & 0.5 & 95.7 & 87.35 & 79.83\\
			& FLAMB\'e (SI \& FD) & 0.5 & 95.7 & 87.69 & 80.42\\
			\midrule
			BERT  &  LAMB  & 1.0 & 340 & 91.41 & 84.99\\
			large  & FLAMB\'e (SI \& FD) & 0.5 & 295.8 & 90.55 & 83.82\\
			\bottomrule
		\end{tabular}
	\end{threeparttable}
\end{minipage}
\vspace{-2mm}
\end{figure}
% !TEX root = main.tex

\vspace{-1mm}
\section{Conclusion}
\vspace{-1mm}

In this paper we studied the design of training algorithms for deep nets containing factorized layers, demonstrating that two simple specializations of standard initialization and regularization schemes from the unfactorized case lead to strong improvements for model compression, knowledge distillation, and MHA-based architectures.
While we largely focused on the case where the unfactorized model uses Gaussian initialization and $\ell_2$-regularization, we believe our work provides guidance for the many cases where other schemes are used to enforce alternative priors or improve optimization.
For example, SI as defined can be applied to any random initialization, while our FD results suggest that regularizers such as penalty terms and DropConnect \citep{wan:13} should be applied on the product matrix rather than directly on individual factors.
The success of SI and FD for both low-rank and tensor decomposition also suggests that these schemes may be useful for other types of factorized neural layers, such as ACDC \citep{moczulski:15} or K-matrices \citep{dao:20}.

\bibliography{references}

\begin{thebibliography}{68}
\providecommand{\natexlab}[1]{#1}
\providecommand{\url}[1]{\texttt{#1}}
\expandafter\ifx\csname urlstyle\endcsname\relax
  \providecommand{\doi}[1]{doi: #1}\else
  \providecommand{\doi}{doi: \begingroup \urlstyle{rm}\Url}\fi

\bibitem[Arora et~al.(2018)Arora, Cohen, and Hazan]{arora:18b}
Sanjeev Arora, Nadav Cohen, and Elad Hazan.
\newblock On the optimization of deep networks: Implicit acceleration by
  overparameterization.
\newblock In \emph{Proceedings of the 35th International Conference on Machine
  Learning}, 2018.

\bibitem[Ba et~al.(2016)Ba, Kiros, and Hinton]{ba:16}
Jimmy~Lei Ba, Jamie~Ryan Kiros, and Geoffrey~E. Hinton.
\newblock Layer normalization.
\newblock arXiv, 2016.

\bibitem[Bai \& Yin(1993)Bai and Yin]{bai:93}
Z.~D. Bai and Y.~Q. Yin.
\newblock Limit of the smallest eigenvalue of a large dimensional sample
  covariance matrix.
\newblock \emph{The Annals of Probability}, 21\penalty0 (3):\penalty0
  1275--1294, 1993.

\bibitem[Bellec et~al.(2018)Bellec, Kappel, Maass, and Legenstein]{bellec:18}
Guillaume Bellec, David Kappel, Wolfgang Maass, and Robert Legenstein.
\newblock Deep rewiring: Training very sparse deep networks.
\newblock In \emph{Proceedings of the 6th International Conference on Learning
  Representations}, 2018.

\bibitem[Bernacchia et~al.(2018)Bernacchia, Lengyel, and
  Hennequin]{bernacchia:18}
Alberto Bernacchia, M\'at\'e Lengyel, and Guillaume Hennequin.
\newblock Exact natural gradient in deep linear networks and application to the
  nonlinear case.
\newblock In \emph{Advances in Neural Information Processing Systems}, 2018.

\bibitem[Blalock et~al.(2020)Blalock, Ortiz, Frankle, and Guttag]{blalock:20}
Davis Blalock, Jose Javier~Gonzalez Ortiz, Jonathan Frankle, and John Guttag.
\newblock What is the state of neural network pruning?
\newblock In \emph{Proceedings of Machine Learning and Systems}, 2020.

\bibitem[Cai et~al.(2019)Cai, Li, Poor, and Chen]{cai:19b}
Changxiao Cai, Gen Li, H.~Vincent Poor, and Yuxin Chen.
\newblock Nonconvex low-rank tensor completion from noisy data.
\newblock In \emph{Advances in Neural Information Processing Systems}, 2019.

\bibitem[Cao et~al.(2020)Cao, Li, Sun, Chen, Lischinski, Cohen-Or, Chen, and
  Tu]{cao:20}
Jinming Cao, Yangyan Li, Mingchao Sun, Ying Chen, Dani Lischinski, Daniel
  Cohen-Or, Baoquan Chen, and Changhe Tu.
\newblock {DO-C}onv: Depthwise over-parameterized convolutional layer.
\newblock arXiv, 2020.

\bibitem[Cettolo et~al.(2014)Cettolo, Niehues, St\"{u}ker, Bentivogli, and
  Federico]{cettolo:14}
Mauro Cettolo, Jan Niehues, Sebastian St\"{u}ker, Luisa Bentivogli, and
  Marcello Federico.
\newblock Report on the 11th {IWSLT} evaluation campaign.
\newblock In \emph{Proceedings of the International Workshop on Spoken Language
  Translation}, 2014.

\bibitem[Chi et~al.(2019)Chi, Lu, and Chen]{chi:19}
Yuejie Chi, Yue~M. Lu, and Yuxin Chen.
\newblock Nonconvex optimization meets low-rank matrix factorization.
\newblock \emph{IEEE Transactions on Signal Processing}, 67:\penalty0
  5239--5269, 2019.

\bibitem[Dao et~al.(2020)Dao, Sohoni, Gu, Eichhorn, Blonder, Leszczynski,
  Rudra, and R\'{e}]{dao:20}
Tri Dao, Nimit Sohoni, Albert Gu, Matthew Eichhorn, Amit Blonder, Megan
  Leszczynski, Atri Rudra, and Christopher R\'{e}.
\newblock Kaleidoscope: An efficient, learnable representation for all
  structured linear maps.
\newblock In \emph{Proceedings of the 8th International Conference on Learning
  Representations}, 2020.

\bibitem[Deng et~al.(2009)Deng, Dong, Socher, Li, Li, and Fei-Fei]{deng:09}
Jia Deng, Wei Dong, Richard Socher, Li-Jia Li, Kai Li, and Li~Fei-Fei.
\newblock Image{N}et: A large-scale hierarchical image database.
\newblock In \emph{Proceedings of the IEEE Conference on Computer Vision and
  Pattern Recognition}, 2009.

\bibitem[Denil et~al.(2013)Denil, Shakibi, Dinh, Ranzato, and
  de~Freitas]{denil:13}
Misha Denil, Babak Shakibi, Laurent Dinh, Marc'Aurelio Ranzato, and Nando
  de~Freitas.
\newblock Predicting parameters in deep learning.
\newblock In \emph{Advances in Neural Information Processing Systems}, 2013.

\bibitem[Devlin et~al.(2019)Devlin, Chang, Lee, and Toutanova]{devlin:19}
Jacob Devlin, Ming-Wei Chang, Kenton Lee, and Kristina Toutanova.
\newblock {BERT}: Pre-training of deep bidirectional transformers for language
  understanding.
\newblock In \emph{Proceedings of the North American Chapter of the Association
  for Computational Linguistics: Human Language Technologies}, 2019.

\bibitem[Ding et~al.(2019)Ding, Guo, Ding, and Han]{ding:19}
Xiaohan Ding, Yuchen Guo, Guiguang Ding, and Jungong Han.
\newblock {ACN}et: Strengthening the kernel skeletons for powerful {CNN} via
  asymmetric convolution blocks.
\newblock In \emph{Proceedings of the IEEE International Conference on Computer
  Vision}, 2019.

\bibitem[Du \& Hu(2019)Du and Hu]{du:19}
Simon~S. Du and Wei Hu.
\newblock Width provably matters in optimization for deep linear neural
  networks.
\newblock In \emph{Proceedings of the 36th International Conference on Machine
  Learning}, 2019.

\bibitem[Frankle \& Carbin(2019)Frankle and Carbin]{frankle:19}
Jonathan Frankle and Michael Carbin.
\newblock The lottery ticket hypothesis: Finding sparse, trainable neural.
\newblock In \emph{Proceedings of the 7th International Conference on Learning
  Representations}, 2019.

\bibitem[Gehring et~al.(2017)Gehring, Auli, Grangier, Yarats, and
  Dauphin]{gehring:17}
Jonas Gehring, Michael Auli, David Grangier, Denis Yarats, and Yann~N. Dauphin.
\newblock Convolutional sequence to sequence learning.
\newblock In \emph{Proceedings of the 35th International Conference on Machine
  Learning}, 2017.

\bibitem[Gray et~al.(2019)Gray, Crowley, and Storkey]{gray:19}
Gavin Gray, Elliot~J. Crowley, and Amos Storkey.
\newblock Separable layers enable structured efficient linear substitutions.
\newblock arXiv, 2019.

\bibitem[Gunasekar et~al.(2019)Gunasekar, Lee, Soudry, and
  Srebro]{gunasekar:18b}
Suriya Gunasekar, Jason Lee, Daniel Soudry, and Nathan Srebro.
\newblock Implicit bias of gradient descent on linear convolutional networks.
\newblock In \emph{Advances in Neural Information Processing Systems}, 2019.

\bibitem[Guo et~al.(2020)Guo, Alvarez, and Salzmann]{guo:20}
Shuxuan Guo, Jose~M. Alvarez, and Mathieu Salzmann.
\newblock Expand{N}ets: Linear over-parameterization to train compact
  convolutional networks.
\newblock arXiv, 2020.

\bibitem[He et~al.(2015)He, Zhang, Ren, and Sun]{he:15}
Kaiming He, Xiangyu Zhang, Shaoqing Ren, and Jian Sun.
\newblock Delving deep into rectifiers: Surpassing human-level performance on
  imagenet classification.
\newblock In \emph{Proceedings of the IEEE International Conference on Computer
  Vision}, 2015.

\bibitem[He et~al.(2016)He, Zhang, Ren, and Sun]{he:16}
Kaiming He, Xiangyu Zhang, Shaoqing Ren, and Jian Sun.
\newblock Deep residual learning for image recognition.
\newblock In \emph{Proceedings of the IEEE Conference on Computer Vision and
  Pattern Recognition}, 2016.

\bibitem[Hoffer et~al.(2018)Hoffer, Banner, Golan, and Soudry]{hoffer:18}
Elad Hoffer, Ron Banner, Itay Golan, and Daniel Soudry.
\newblock Norm matters: Efficient and accurate normalization schemes in deep
  networks.
\newblock In \emph{Advances in Neural Information Processing Systems}, 2018.

\bibitem[Idelbayev \& Carreira-Perpi{\~n}\'an(2020)Idelbayev and
  Carreira-Perpi{\~n}\'an]{idelbayev:20}
Yerlan Idelbayev and Miguel~A. Carreira-Perpi{\~n}\'an.
\newblock Low-rank compression of neural nets: Learning the rank of each layer.
\newblock In \emph{Proceedings of the IEEE Conference on Computer Vision and
  Pattern Recognition}, 2020.

\bibitem[Ioannou et~al.(2016)Ioannou, Robertson, Shotton, Cipolla, and
  Criminisi]{ioannou:16}
Yani Ioannou, Duncan Robertson, Jamie Shotton, Roberto Cipolla, and Antonio
  Criminisi.
\newblock Training {CNN}s with low-rank filters for efficient image
  classification.
\newblock In \emph{Proceedings of the 4th International Conference on Learning
  Representations}, 2016.

\bibitem[Ioffe \& Szegedy(2015)Ioffe and Szegedy]{ioffe:15}
Sergey Ioffe and Christian Szegedy.
\newblock Batch normalization: Accelerating deep network training by reducing
  internal covariate shift.
\newblock In \emph{Proceedings of the 32nd International Conference on Machine
  Learning}, 2015.

\bibitem[Keshavan et~al.(2010)Keshavan, Montanari, and Oh]{keshavan:10}
Raghunandan~H. Keshavan, Andrea Montanari, and Sewoong Oh.
\newblock Matrix completion from a few entries.
\newblock \emph{IEEE Transactions on Information Theory}, 56:\penalty0
  2980--2998, 2010.

\bibitem[Kim et~al.(2018)Kim, Park, and Kwak]{kim:18b}
Jangho Kim, SeongUk Park, and Nojun Kwak.
\newblock Paraphrasing complex network: Network compression via factor
  transfer.
\newblock In \emph{Advances in Neural Information Processing Systems}, 2018.

\bibitem[Kossaifi et~al.(2020{\natexlab{a}})Kossaifi, Lipton, Kolbeinsson,
  Khanna, Furlanello, and Anandkumar]{kossaifi:20b}
Jean Kossaifi, Zachary~C. Lipton, Arinbjorn Kolbeinsson, Aran Khanna, Tommaso
  Furlanello, and Anima Anandkumar.
\newblock Tensor regression networks.
\newblock \emph{Journal of Machine Learning Research}, 21:\penalty0 1--21,
  2020{\natexlab{a}}.

\bibitem[Kossaifi et~al.(2020{\natexlab{b}})Kossaifi, Toisoul, Bulat,
  Panagakis, and Pantic]{kossaifi:20a}
Jean Kossaifi, Antoine Toisoul, Adrian Bulat, Yannis Panagakis, and Maja
  Pantic.
\newblock Factorized higher-order {CNN}s with an application to spatio-temporal
  emotion estimation.
\newblock In \emph{Proceedings of the IEEE Conference on Computer Vision and
  Pattern Recognition}, 2020{\natexlab{b}}.

\bibitem[Krizhevksy(2009)]{krizhevsky:09}
Alex Krizhevksy.
\newblock Learning multiple layers of features from tiny images.
\newblock Technical report, 2009.

\bibitem[LeCun et~al.(1990)LeCun, Decker, and Solla]{lecun:90}
Yann LeCun, John~S. Decker, and Sara~A. Solla.
\newblock Optimal brain damage.
\newblock In \emph{Advances in Neural Information Processing Systems}, 1990.

\bibitem[Lee et~al.(2018)Lee, Ajanthan, and Torr]{lee:18}
Namhoon Lee, Thalaiyasingam Ajanthan, and Philip H.~S. Torr.
\newblock {SNIP}: Single-shot network pruning based on connection sensitivity.
\newblock In \emph{Proceedings of the 6th International Conference on Learning
  Representations}, 2018.

\bibitem[Loshchilov \& Hutter(2019)Loshchilov and Hutter]{loshchilov:19}
Ilya Loshchilov and Frank Hutter.
\newblock Decoupled weight decay regularization.
\newblock In \emph{Proceedings of the 7th International Conference on Learning
  Representations}, 2019.

\bibitem[Mishkin \& Matas(2016)Mishkin and Matas]{mishkin:16}
Dmytro Mishkin and Jiri Matas.
\newblock All you need is a good init.
\newblock In \emph{Proceedings of the 4th International Conference on Learning
  Representations}, 2016.

\bibitem[Mocanu et~al.(2018)Mocanu, Mocanu, Stone, Nguyen, Gibescu, and
  Liotta]{mocanu:18}
Decebal~Constantin Mocanu, Elena Mocanu, Peter Stone, Phuong~H. Nguyen,
  Madeleine Gibescu, and Antonio Liotta.
\newblock Scalable training of artificial neural networks with adaptive sparse
  connectivity inspired by network science.
\newblock \emph{Nature Communications}, 9\penalty0 (1), 2018.

\bibitem[Moczulski et~al.(2015)Moczulski, Denil, Appleyard, and
  de~Freitas]{moczulski:15}
Marcin Moczulski, Misha Denil, Jeremy Appleyard, and Nando de~Freitas.
\newblock {ACDC}: A structured efficient linear layer.
\newblock arXiv, 2015.

\bibitem[Mostafa \& Wang(2019)Mostafa and Wang]{mostafa:19}
Hesham Mostafa and Xin Wang.
\newblock Parameter efficient training of deep convolutional neural networks by
  dynamic sparse reparameterization.
\newblock In \emph{Proceedings of the 36th International Conference on Machine
  Learning}, 2019.

\bibitem[Murphy(2012)]{murphy:12}
Kevin~P. Murphy.
\newblock \emph{Machine Learning: A Probabilistic Perspective}.
\newblock MIT Press, 2012.

\bibitem[Nakkiran et~al.(2015)Nakkiran, Alvarez, Prabhavalkar, and
  Parada]{nakkiran:15}
Preetum Nakkiran, Raziel Alvarez, Rohit Prabhavalkar, and Carolina Parada.
\newblock Compressing deep neural networks using a rank-constrained topology.
\newblock In \emph{Proceedings of the 16th Annual Conference of the
  International Speech Communication Association}, 2015.

\bibitem[Neyshabur et~al.(2018)Neyshabur, Bhojanapalli, and
  Srebro]{neyshabur:18}
Behnam Neyshabur, Srinadh Bhojanapalli, and Nathan Srebro.
\newblock A {PAC-B}ayesian approach to spectrally-normalized margin bounds for
  neural networks.
\newblock In \emph{Proceedings of the 6th International Conference on Learning
  Representations}, 2018.

\bibitem[NVIDIA(2020)]{nvidia:20}
NVIDIA.
\newblock {NVIDIA Ampere GA102 GPU A}rchitecture, 2020.
\newblock Whitepaper.

\bibitem[Oseledets(2011)]{oseledets:11}
Ivan~V. Oseledets.
\newblock Tensor-train decomposition.
\newblock \emph{SIAM Journal on Scientific Computing}, 33\penalty0
  (5):\penalty0 2295--2317, 2011.

\bibitem[Paszke et~al.(2019)Paszke, Gross, Massa, Lerer, Bradbury, Chanan,
  Killeen, Lin, Gimelshein, Antiga, Desmaison, Kopf, Yang, DeVito, Raison,
  Tejani, Chilamkurthy, Steiner, Fang, Bai, and Chintala]{paszke:19}
Adam Paszke, Sam Gross, Francisco Massa, Adam Lerer, James Bradbury, Gregory
  Chanan, Trevor Killeen, Zeming Lin, Natalia Gimelshein, Luca Antiga, Alban
  Desmaison, Andreas Kopf, Edward Yang, Zachary DeVito, Martin Raison, Alykhan
  Tejani, Sasank Chilamkurthy, Benoit Steiner, Lu~Fang, Junjie Bai, and Soumith
  Chintala.
\newblock Py{T}orch: An imperative style, high-performance deep learning
  library.
\newblock In \emph{Advances in Neural Information Processing Systems}, 2019.

\bibitem[Rajpurkar et~al.(2016)Rajpurkar, Zhang, Lopyrev, and
  Liang]{rajpurkar:16}
Pranav Rajpurkar, Jian Zhang, Konstantin Lopyrev, and Percy Liang.
\newblock {SQuAD}: 100,000+ questions for machine comprehension of text.
\newblock In \emph{Proceedings of the Conference on Empirical Methods in
  Natural Language Processing}, 2016.

\bibitem[Renda et~al.(2020)Renda, Frankle, and Carbin]{renda:20}
Alex Renda, Jonathan Frankle, and Michael Carbin.
\newblock Comparing rewinding and fine-tuning in neural network pruning.
\newblock In \emph{Proceedings of the 8th International Conference on Learning
  Representations}, 2020.

\bibitem[Saxe et~al.(2014)Saxe, McClelland, and Ganguli]{saxe:14}
Andrew~M. Saxe, James~L. McClelland, and Surya Ganguli.
\newblock Exact solutions to the nonlinear dynamics of learning in deep linear
  neural networks.
\newblock In \emph{Proceedings of the 2nd International Conference on Learning
  Representations}, 2014.

\bibitem[Simonyan \& Zisserman(2015)Simonyan and Zisserman]{simonyan:15}
Karen Simonyan and Andrew Zisserman.
\newblock Very deep convolutional networks for large-scale image recognition.
\newblock In \emph{Proceedings of the 3rd International Conference on Learning
  Representations}, 2015.

\bibitem[Srebro \& Shraibman(2005)Srebro and Shraibman]{srebro:05}
Nathan Srebro and Adi Shraibman.
\newblock Rank, trace-norm and max-norm.
\newblock In \emph{Proceedings of the International Conference on Computational
  Learning Theory}, 2005.

\bibitem[Su et~al.(2020)Su, Chen, Cai, Wu, Gao, Wang, and Lee]{su:20}
Jingtong Su, Yihang Chen, Tianle Cai, Tianhao Wu, Ruiqi Gao, Liwei Wang, and
  Jason~D. Lee.
\newblock Sanity-checking pruning methods: Random tickets can win the jackpot.
\newblock arXiv, 2020.

\bibitem[Tai et~al.(2016)Tai, Xiao, Zhang, Wang, and E]{tai:16}
Cheng Tai, Tong Xiao, Yi~Zhang, Xiaogang Wang, and Weinan E.
\newblock Convolutional neural networks with low-rank regularization.
\newblock In \emph{Proceedings of the 4th International Conference on Learning
  Representations}, 2016.

\bibitem[Tian et~al.(2020)Tian, Krishnan, and Isola]{tian:20}
Yonglong Tian, Dilip Krishnan, and Phillip Isola.
\newblock Contrastive representation distillation.
\newblock In \emph{Proceedings of the 8th International Conference on Learning
  Representations}, 2020.

\bibitem[Vaswani et~al.(2017)Vaswani, Shazeer, Parmar, Uszkoreit, Jones, Gomez,
  Kaiser, and Polosukhin]{vaswani:17}
Ashish Vaswani, Noam Shazeer, Niki Parmar, Jakob Uszkoreit, Llion Jones,
  Aidan~N. Gomez, {\L}ukasz Kaiser, and Illia Polosukhin.
\newblock Attention is all you need.
\newblock In \emph{Advances in Neural Information Processing Systems}, 2017.

\bibitem[Wan et~al.(2013)Wan, Zeiler, Zhang, LeCun, and Fergus]{wan:13}
Li~Wan, Matthew Zeiler, Sixin Zhang, Yann LeCun, and Rob Fergus.
\newblock Regularization of neural networks using {DropConnect}.
\newblock In \emph{Proceedings of the 30th International Conference on Machine
  Learning}, 2013.

\bibitem[Wang et~al.(2019)Wang, Grosse, Fidler, and Zhang]{wang:19a}
Chaoqi Wang, Roger Grosse, Sanja Fidler, and Guodong Zhang.
\newblock {E}igen{D}amage: Structured pruning in the {K}ronecker-factored
  eigenbasis.
\newblock In \emph{Proceedings of the 36th International Conference on Machine
  Learning}, 2019.

\bibitem[Wang et~al.(2020)Wang, Zhang, and Grosse]{wang:20a}
Chaoqi Wang, Guodong Zhang, and Roger Grosse.
\newblock Picking winning tickets before training by preserving gradient flow.
\newblock In \emph{Proceedings of the 8th International Conference on Learning
  Representations}, 2020.

\bibitem[Xu et~al.(2020)Xu, Rui, Li, and Gu]{xu:20b}
Kunran Xu, Lai Rui, Yishi Li, and Lin Gu.
\newblock Feature normalized knowledge distillation for image classification.
\newblock In \emph{Proceedings of the European Conference on Computer Vision},
  2020.

\bibitem[Xu \& Liu(2019)Xu and Liu]{xu:19}
Ting-Bing Xu and Cheng-Lin Liu.
\newblock Data-distortion guided self-distillation for deep neural networks.
\newblock In \emph{Proceedings of the 33th AAAI Conference on Artificial
  Intelligence}, 2019.

\bibitem[Yaguchi et~al.(2019)Yaguchi, Suzuki, Nitta, Sakata, and
  Tanizawa]{yaguchi:19}
Atsushi Yaguchi, Taiji Suzuki, Shuhei Nitta, Yukinobu Sakata, and Akiyuki
  Tanizawa.
\newblock Decomposable-{N}et: Scalable low-rank compression for neural
  networks.
\newblock arXiv, 2019.

\bibitem[Yang et~al.(2019)Yang, Xie, Su, and Yuille]{yang:19b}
Chenglin Yang, Lingxi Xie, Chi Su, and Alan~L. Yuille.
\newblock Snapshot distillation: Teacher-student optimization in one
  generation.
\newblock In \emph{Proceedings of the IEEE Conference on Computer Vision and
  Pattern Recognition}, 2019.

\bibitem[Yang \& Schoenholz(2017)Yang and Schoenholz]{yang:17}
Greg Yang and Samuel~S. Schoenholz.
\newblock Mean field residual networks: On the edge of chaos.
\newblock In \emph{Advances in Neural Information Processing Systems}, 2017.

\bibitem[Yang et~al.(2020)Yang, Tang, Wen, Yan, Hu, Li, Li, and Chen]{yang:20b}
Huanrui Yang, Minxue Tang, Wei Wen, Feng Yan, Daniel Hu, Ang Li, Hai Li, and
  Yiran Chen.
\newblock Learning low-rank deep neural networks via singular vector
  orthogonality regularization and singular value sparsification.
\newblock In \emph{Proceedings of the IEEE Conference on Computer Vision and
  Pattern Recognition}, 2020.

\bibitem[You et~al.(2020)You, Li, Reddi, Hseu, Kumar, Bhojanapalli, Song,
  Demmel, Keutzer, and Hsieh]{you:20}
Yang You, Jing Li, Sashank Reddi, Jonathan Hseu, Sanjiv Kumar, Srinadh
  Bhojanapalli, Xiaodan Song, James Demmel, Kurt Keutzer, and Cho-Jui Hsieh.
\newblock Large batch optimization for deep learning: Training {BERT} in 76
  minutes.
\newblock In \emph{Proceedings of the 8th International Conference on Learning
  Representations}, 2020.

\bibitem[Zagoruyko \& Komodakis(2016)Zagoruyko and Komodakis]{zagoruyko:16}
Sergey Zagoruyko and Nikos Komodakis.
\newblock Wide residual networks.
\newblock In \emph{Proceedings of the British Machine Vision Conference}, 2016.

\bibitem[Zeng \& Urtasun(2019)Zeng and Urtasun]{zeng:19}
Wenyuan Zeng and Raquel Urtasun.
\newblock {MLP}rune: Multi-layer pruning for automated neural network
  compression.
\newblock OpenReview, 2019.

\bibitem[Zhang et~al.(2019)Zhang, Wang, Xu, and Grosse]{zhang:19}
Guodong Zhang, Chaoqi Wang, Bowen Xu, and Roger Grosse.
\newblock Three mechanisms of weight decay regularization.
\newblock In \emph{Proceedings of the 7th International Conference on Learning
  Representations}, 2019.

\bibitem[Zhang et~al.(2020)Zhang, Lin, Jiang, Zhang, Wang, Xue, Zhang, and
  Yang]{zhang:20}
Yuge Zhang, Zejun Lin, Junyang Jiang, Quanlu Zhang, Yujing Wang, Hui Xue, Chen
  Zhang, and Yaming Yang.
\newblock Deeper insights into weight sharing in neural architecture search.
\newblock arXiv, 2020.

\end{thebibliography}
\bibliographystyle{iclr2021_conference}

\appendix
% !TEX root = main.tex

\section{Generalization Error of Factorized Layers}\label{app:gen}

In this section we briefly discuss how to apply the generalization bounds in \citet{neyshabur:18} to factorized models.

\begin{Def}
	For any $\gamma>0$ and distribution $\gD$ over classification data $\gX\times\gY$ the {\bf $\gamma$-margin-loss} of a model $f_\theta:\gX\mapsto\gY$ is defined as $\ell_\gD^\gamma(f_\theta)=\mathbb P_{(\vx,y)\sim\gD}\left(f_\theta(\vx)[y]\le\gamma+\max_{y'\ne y}f_\theta(\vx)[y']\right)$.
\end{Def}

Note that $\ell_\gD^0$ is the expected classification loss over the distribution and $\ell_{\UNIF(\gS)}^\gamma$ is the empirical $\gamma$-margin-loss.
We have the following corollaries:
\begin{Cor}\label{cor:rank}
	Let $f_\theta$ be a neural network with $L-1$ factorized fully-connected ReLU layers $g_i(\mU_i\mV_i^T,\vx_{i-1})=\max\left\{\mU_i\mV_i^T\vx_{i-1},\vzero_m\right\}$ of hidden dimension $m$ and one factorized classification layer $g_L(\mU_L\mV_L^T,\vx_{L-1})=\mU_L\mV_L^T\vx_{L-1}$.
	Let $\gD$ be a distribution over $B$-bounded classification data $\gX\times\gY$ and suppose we have a finite set $\gS$ of i.i.d. samples from it.
	If $\RANK(\mU_i\mV_i^T)\le r$ and $\|\mU_i\mV_i^T\|_2\}\le\sigma$ for all $i$ then for any $\delta>0$ we have w.p. $1-\delta$ that
	\begin{equation}
		\ell_\gD^0(f_\theta)
		\le\ell_{\UNIF(\gS)}^\gamma+\gO\left(\sqrt{\frac{B^2L^3m\sigma^{2L}r\log(Lm)\prod_{i=1}^L\|\mU_i\mV_i^T\|_2^2+\log\frac{L|\gS|}\delta}{\gamma^2|\gS|}}\right)
	\end{equation}
\end{Cor}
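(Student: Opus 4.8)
The plan is to recognize that Corollary~\ref{cor:rank} is a direct specialization of the spectrally-normalized PAC-Bayes margin bound of \citet[Theorem~1]{neyshabur:18}, for which the only factorization-specific input is an elementary rank inequality. The essential observation is that the forward pass is transparent to the factorization: since each product $\mU_i\mV_i^T$ is simply a matrix $\mW_i\in\R^{m\times n}$, the network $f_\theta$ is literally an ordinary $L$-layer feedforward ReLU network with weight matrices $\mW_i:=\mU_i\mV_i^T$. Hence no new capacity-control machinery is needed, and I can invoke the existing bound as a black box and then exploit the low-rank structure only through the norms of the recomposed matrices.

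First I would restate \citet[Theorem~1]{neyshabur:18} in our notation: for an $L$-layer ReLU network of hidden width $m$ over $B$-bounded inputs, with probability at least $1-\delta$ the gap $\ell_\gD^0(f_\theta)-\ell_{\UNIF(\gS)}^\gamma$ is bounded by
\[
\gO\!\left(\sqrt{\frac{B^2 L^2 m \log(Lm)\left(\prod_{i=1}^L\|\mW_i\|_2^2\right)\left(\sum_{i=1}^L\frac{\|\mW_i\|_F^2}{\|\mW_i\|_2^2}\right)+\log\frac{L|\gS|}{\delta}}{\gamma^2|\gS|}}\right).
\]
Substituting $\mW_i=\mU_i\mV_i^T$ keeps the Lipschitz factor as $\prod_{i=1}^L\|\mU_i\mV_i^T\|_2^2$, matching the corresponding factor in the claim, while the additive confidence term and the $\gamma^2|\gS|$ denominator already align with the stated form.

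The single piece of genuine content is controlling the spectral-complexity sum. For any matrix $\mW$ with singular values $s_1\ge s_2\ge\cdots$ one has $\|\mW\|_F^2=\sum_j s_j^2\le\RANK(\mW)\,s_1^2=\RANK(\mW)\,\|\mW\|_2^2$, hence $\|\mW\|_F^2/\|\mW\|_2^2\le\RANK(\mW)$; this is exactly the inequality flagged in Section~\ref{ssec:fc}. Applying it layerwise with $\RANK(\mU_i\mV_i^T)\le r$ gives $\sum_{i=1}^L\|\mU_i\mV_i^T\|_F^2/\|\mU_i\mV_i^T\|_2^2\le Lr$, which combined with the $L^2$ already present yields the $L^3 r$ dependence. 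The spectral-norm hypothesis $\|\mU_i\mV_i^T\|_2\le\sigma$ then bounds the Lipschitz product by $\prod_{i=1}^L\|\mU_i\mV_i^T\|_2^2\le\sigma^{2L}$, which is the origin of the $\sigma^{2L}$ factor; absorbing the remaining numerical constants (including the $e^2$-type factors from the perturbation analysis in \citet{neyshabur:18}) into $\gO(\cdot)$ delivers the stated inequality.

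I expect no substantive obstacle, since the argument is a specialization rather than a fresh proof: the bulk of the work is notational bookkeeping—matching the layer count $d=L$, width $h=m$, and sample count to our $|\gS|$, and being consistent about whether the spectral norms are carried as the explicit product $\prod_i\|\mU_i\mV_i^T\|_2^2$ or uniformly replaced by its bound $\sigma^{2L}$. The one point requiring care is that the rank inequality must be applied to the \emph{recomposed} matrix $\mU_i\mV_i^T$ rather than to the individual factors: it is $\RANK(\mU_i\mV_i^T)\le r$—guaranteed here by $r\le\min\{m,n\}$ together with the factored form—that converts the generic Frobenius-norm complexity into the rank-driven $\sqrt{mr/|\gS|}$ scaling highlighted in the main text.
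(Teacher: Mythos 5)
Your proposal is correct and follows exactly the paper's own proof: invoke \citet[Theorem~1]{neyshabur:18} as a black box, apply $\|\mW_i\|_F^2/\|\mW_i\|_2^2\le\RANK(\mW_i)\le r$ layerwise, and substitute $\mW_i=\mU_i\mV_i^T$. Your added bookkeeping (the $Lr$ sum absorbing into $L^3r$, and the remark that carrying both $\sigma^{2L}$ and the explicit product $\prod_i\|\mU_i\mV_i^T\|_2^2$ is a redundancy in how the bound is stated) is consistent with, and slightly more careful than, the paper's one-line argument.
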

\begin{proof}
	Apply the inequality $\|\mW_i\|_F^2/\|\mW_i\|_2^2\le\RANK(\mW_i)$ to \citet[Theorem~1]{neyshabur:18} and substitute $\mW_i=\mU_i\mV_i^T$.
\end{proof}
\begin{Cor}\label{cor:frob}
	Let $f_\theta$ be a neural network with $L-1$ factorized fully-connected ReLU layers $g_i(\mU_i(\prod_{j=1}^d\mM_{ij})\mV_i^T,\vx_{i-1})=\max\left\{\mU_i(\prod_{j=1}^d\mM_{ij})\mV_i^T\vx_{i-1},\vzero_m\right\}$ of hidden dimension $m$ and one factorized classification layer $g_L(\mU_L(\prod_{j=1}^d\mM_{Lj})\mV_L^T,\vx_{L-1})=\mU_L(\prod_{j=1}^d\mM_{Lj})\mV_L^T\vx_{L-1}$.
	Let $\gD$ be a distribution over $B$-bounded classification data $\gX\times\gY$ from which we have a finite set $\gS$ of i.i.d. samples.
	If $\|\mU_i\prod_{j=1}^d\mM_{ij}\mV_i^T\|_2\}\le\sigma$ for all $i$ then for any $\delta>0$ we have w.p. $1-\delta$ that
	\begin{equation}
		\ell_\gD^0(f_\theta)
		\le\ell_{\UNIF(\gS)}^\gamma+\gO\left(\sqrt{\frac{B^2L^2m\sigma^{2L-2}\log(Lm)\sum_{i=1}^L\|\mU_i(\prod_{j=1}^d\mM_{ij})\mV_i^T\|_F^2+\log\frac{L|\gS|}\delta}{\gamma^2|\gS|}}\right)
	\end{equation}
\end{Cor}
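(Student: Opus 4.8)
The plan is to follow exactly the recipe used for Corollary~\ref{cor:rank}: regard the factorized network as an ordinary ReLU network whose $i$th weight matrix is the recomposed product $\mW_i=\mU_i(\prod_{j=1}^d\mM_{ij})\mV_i^T$, and then invoke the spectrally-normalized margin bound of \citet[Theorem~1]{neyshabur:18}. The point that makes the substitution legitimate is that each factorized layer $g_i$ computes precisely $\max\{\mW_i\vx_{i-1},\vzero_m\}$, so $f_\theta$ is the very same function as the unfactorized ReLU network with weights $\mW_1,\dots,\mW_L$; thus their theorem applies verbatim with $\mW_i$ in place of the $i$th layer matrix. Writing $m$ for the hidden dimension and $|\gS|$ for the sample size, their bound controls the excess margin loss by a quantity of order
\begin{equation}
\sqrt{\frac{B^2L^2m\log(Lm)\left(\prod_{i=1}^L\|\mW_i\|_2^2\right)\left(\sum_{i=1}^L\frac{\|\mW_i\|_F^2}{\|\mW_i\|_2^2}\right)+\log\frac{L|\gS|}\delta}{\gamma^2|\gS|}}.
\end{equation}

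The only place the present corollary departs from Corollary~\ref{cor:rank} is in how I treat the product-sum factor. Instead of bounding each ratio $\|\mW_i\|_F^2/\|\mW_i\|_2^2$ by the rank, I would distribute the product across the sum, cancelling one spectral-norm factor against the Frobenius numerator in each term:
\begin{equation}
\left(\prod_{i=1}^L\|\mW_i\|_2^2\right)\sum_{i=1}^L\frac{\|\mW_i\|_F^2}{\|\mW_i\|_2^2}=\sum_{i=1}^L\|\mW_i\|_F^2\prod_{k\ne i}\|\mW_k\|_2^2\le\sigma^{2L-2}\sum_{i=1}^L\|\mW_i\|_F^2,
\end{equation}
where the inequality applies the hypothesis $\|\mW_k\|_2\le\sigma$ to the $L-1$ surviving factors in each product. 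Substituting this estimate into the displayed bound and folding constants and logarithmic factors into $\gO(\cdot)$ produces exactly the claimed inequality.

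I do not expect a genuine obstacle: the argument is a black-box application of an existing theorem followed by a one-line rearrangement. The two points needing care are (i) confirming that the recomposed-matrix substitution is valid precisely because the factorization leaves the computed function unchanged, and (ii) tracking that the exponent is $\sigma^{2L-2}$ rather than $\sigma^{2L}$, which is exactly the effect of cancelling one $\|\mW_i\|_2^2$ against the Frobenius numerator \emph{before} invoking $\|\mW_k\|_2\le\sigma$ on the remaining factors. Both are immediate once the identity above is written down.
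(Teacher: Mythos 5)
Your proposal is correct and follows essentially the same route as the paper's proof: both invoke \citet[Theorem~1]{neyshabur:18} with the recomposed matrices $\mW_i=\mU_i(\prod_{j=1}^d\mM_{ij})\mV_i^T$ substituted for the layer weights, apply the identity $\left(\prod_{i=1}^L\|\mW_i\|_2^2\right)\sum_{i=1}^L\|\mW_i\|_F^2/\|\mW_i\|_2^2=\sum_{i=1}^L\|\mW_i\|_F^2\prod_{j\ne i}\|\mW_j\|_2^2$, and bound the surviving $L-1$ spectral factors by $\sigma^{2L-2}$. Your write-up merely makes explicit the two steps (function-preservation of the substitution and the cancellation yielding the $\sigma^{2L-2}$ exponent) that the paper leaves implicit.
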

\begin{proof}
	Apply the equality $\left(\prod_{i=1}^L\|\mW_i\|_2^2\right)\sum_{i=1}^L\frac{\|\mW_i\|_F^2}{\|\mW_i\|_2^2}=\sum_{i=1}^L\|\mW_i\|_F^2\prod_{j\ne i}\|\mW_j\|_2^2$ to \citet[Theorem~1]{neyshabur:18} and substitute $\mW_i=\mU_i(\prod_{j=1}^d\mM_{ij})\mV_i^T$.
\end{proof}

\section{Proof of Claim~\ref{clm:efflr}}\label{app:proof}
	
	\begin{proof}
		Let $\rho_t=\|\mW_t\|_F$ be the Frobenius norm of the composed matrix at time $t$.
		Applying the update rules for $\mU_t$ and $\mV_t$ and using the fact that $\rho_t\nabla_{\mW_t}=\nabla_{\hat\mW_t}$ yields
		\begin{equation}\label{eq:update}
		\begin{split}
		\mW_{t+1}
		=(\mU_t-\eta\nabla_{\mU_t})(\mV_t^T-\eta\nabla_{\mV_t^T})
		&=(\mU_t-\eta\nabla_{\mW_t}\mV_t)(\mV_t^T-\eta\mU_t^T\nabla_{\mW_t^T})\\
		&=\mW_t-\frac\eta{\rho_t}\hat\nabla_t+\eta^2\nabla_{\mW_t}\mW_t^T\nabla_{\mW_t}
		\end{split}
		\end{equation}
		Taking the squared norm of both sides yields $\rho_{t+1}^2=\rho_t^2-2\eta\Tr(\hat\mW_t^T\hat\nabla_t)+\gO(\eta^2)$;
		we can then take the square root of both sides and use a Tayor expansion to obtain
		\begin{equation}\label{eq:norm}
		\rho_{t+1}
		=\rho_t\sqrt{1-\frac{2\eta}{\rho_t^2}\Tr(\hat\mW_t^T\hat\nabla_t)+\gO(\eta^2)}
		=\rho_t-\frac\eta{\rho_t}\Tr(\hat\mW_t^T\hat\nabla_t)+\gO(\eta^2)
		\end{equation}
		Then, starting from Equation~\ref{eq:update} divided by $\rho_{t+1}$, substituting Equation~\ref{eq:norm}, and applying a Taylor expansion yields
		\begin{equation}
		\begin{split}
		\hat\mW_{t+1}
		&=\frac{\rho_t}{\rho_{t+1}}\hat\mW_t-\frac\eta{\rho_t\rho_{t+1}}\hat\nabla_t+\gO(\eta^2)\\
		&=\frac1{1-\frac\eta{\rho_t^2}\Tr(\hat\mW_t^T\hat\nabla_t)}\hat\mW_t-\frac\eta{\rho_t^2-\eta\Tr(\hat\mW_t^T\hat\nabla_t)}\hat\nabla_t+\gO(\eta^2)\\
		&=\left(1+\frac\eta{\rho_t^2}\Tr(\hat\mW_t^T\hat\nabla_t)\right)\hat\mW_t-\frac\eta{\rho_t^2}\hat\nabla_t+\gO(\eta^2)\\
		&=\hat\mW_t-\frac\eta{\rho_t^2}(\hat\nabla_t-(\hat\vw_t^T\VEC(\hat\nabla_t))\hat\mW_t)+\gO(\eta^2)
		\end{split}
		\end{equation}
		Vectorizing and observing that $(\hat\vw_t^T\VEC(\hat\nabla_t))\hat\vw_t=\hat\vw_t\hat\vw_t^T\VEC(\hat\nabla_t)$ yields the result.
	\end{proof}

%\begin{figure}[!h]
%	\centering
%	%	\includegraphics[width=\linewidth]{}
%	\caption{\label{fig:example}
%		image caption
%	}
%\end{figure}

%\begin{algorithm}[!h]
%	\DontPrintSemicolon
%	\KwIn{input to algorithm}
%	\For{iteration}{
%		do something \tcp*{comment}
%	}
%	\caption{\label{alg:example}
%		algorithm description
%	}
%\end{algorithm}
% !TEX root = main.tex

\newpage
\section{Experimental Details for Training Convolutional Networks}\label{app:conv}

For experiments with regular ResNets on CIFAR we used code provided here: \url{https://github.com/akamaster/pytorch_resnet_cifar10}.
All hyperparameter settings are the same, except initialization and regularization as appropriate, with the exception that we use a warmup epoch with a 10 times smaller learning rate for ResNet56 for stability (this is already done by default for ResNet110).

For comparisons with sparse model training of ResNet32$^\textrm{x2}$ and VGG19$^\textrm{NFC}$ we use code by \citet[\url{https://github.com/alecwangcq/EigenDamage-Pytorch}]{wang:19a}, which is closely related to that of the lottery-ticket-guessing paper by \citet{wang:20a}.
All hyperparameter settings are the same, except (1) initialization and regularization as appropriate and (2) for Tiny-ImageNet we only train for 200 epochs instead of 300.

For comparisons with tensor decomposition training of WideResNet we use code by \citet[\url{https://github.com/BayesWatch/deficient-efficient}]{gray:19}.
All hyperparameter settings are the same, except initialization and regularization as appropriate.

\begin{table}[!t]
	\centering
	\small
	\begin{threeparttable}
		\caption{\label{tab:vgg19full}
			Pruning, sparse training, and low-rank training for VGG-19$^\textrm{NFC}$, i.e. the model of \citet{simonyan:15} with no fully-connected layers, as in \citet{wang:20a}. 
		}
		\begin{tabular}{lcccc}
			\hline
			{\bf CIFAR-10} & uncompressed & 0.1 & 0.05 & 0.02 \\
			\hline
			baseline$^\ast$ & 94.23 \\
			baseline$^\dagger$ & 93.70 \\
			baseline (reproduced) & 93.75$\pm$0.48 \\
			OBD$^\ast$ \citet{lecun:90} & & 93.74 & 93.58 & 93.49 \\
			MLPrune$^\ast$ \citep{zeng:19} & & 93.83 & 93.69 & 93.49 \\
			LT$^\ast$, original initialization & & 93.51 & 92.92 & 92.34 \\
			LT$^\ast$, reset to epoch 5 & & 93.82 & 93.61 & 93.09 \\
			LR Rewinding$^\dagger$ \citep{renda:20} & & 94.14$\pm$0.17 & 93.99$\pm$0.15 & 10.00$\pm$0.00 \\
			Hybrid Tickets$^\dagger$ \citep{su:20} & & 94.00$\pm$0.12 & 93.83$\pm$0.10 & 93.52$\pm$0.28 \\
			DSR$^\ast$ \citep{mostafa:19} & & 93.75 & 93.86 & 93.13 \\
			SET$^\ast$ \citep{mocanu:18} & & 92.46 & 91.73 & 89.18 \\
			Deep-R$^\ast$ \citep{bellec:18} & & 90.81 & 89.59 & 86.77 \\
			SNIP$^\ast$ \citep{lee:18} & & 93.63$\pm$0.06 & 93.43$\pm$0.20 & 92.05$\pm$0.28 \\
			GraSP$^\ast$ \citep{wang:20a} & & 92.59$\pm$0.10 & 91.01$\pm$0.21 & 87.51$\pm$0.31 \\
			Random Tickets$^\dagger$ \citep{su:20} & & 93.77$\pm$0.10 & 93.42$\pm$0.22 & 92.45$\pm$0.22 \\
			Low-Rank & & 90.71$\pm$0.42 & 88.08$\pm$0.14 & 80.81$\pm$0.85 \\
			Low-Rank (SI) & & 90.99$\pm$0.34 & 88.61$\pm$0.95 & 82.53$\pm$0.14 \\
			Low-Rank (FD) & & 91.57$\pm$0.26 & 88.16$\pm$0.50 & 82.81$\pm$0.37 \\
			Low-Rank (SI \& FD) & & 91.58$\pm$0.27 & 88.98$\pm$0.18 & 83.27$\pm$0.56 \\
			\hline
			{\bf CIFAR-100} & uncompressed & 0.1 & 0.05 & 0.02 \\
			\hline
			baseline$^\ast$ & 74.16 \\
			baseline$^\dagger$ & 72.60 \\
			baseline (reproduced) & 73.42$\pm$0.17 \\
			OBD$^\ast$ \citet{lecun:90} & & 73.83 & 71.98 & 67.79 \\
			MLPrune$^\ast$ \citep{zeng:19} & & 73.79 & 73.07 & 71.69 \\
			LT$^\ast$, original initialization & & 72.78 & 71.44 & 68.95 \\
			LT$^\ast$, reset to epoch 5 & & 74.06 & 72.87 & 70.55 \\
			LR Rewinding$^\dagger$ \citep{renda:20} & & 73.73$\pm$0.18 & 72.39$\pm$0.40 & 1.00$\pm$0.00 \\
			Hybrid Tickets$^\dagger$ \citep{su:20} & & 73.53$\pm$0.20 & 73.10$\pm$0.11 & 71.61$\pm$0.46 \\
			DSR$^\ast$ \citep{mostafa:19} & & 72.31 & 71.98 & 70.70 \\
			SET$^\ast$ \citep{mocanu:18} & & 72.36 & 69.81 & 65.94 \\
			Deep-R$^\ast$ \citep{bellec:18} & & 66.83 & 63.46 & 59.58 \\
			SNIP$^\ast$ \citep{lee:18} & & 72.84$\pm$0.22 & 71.83$\pm$0.23 & 58.46$\pm$1.10 \\
			GraSP$^\ast$ \citep{wang:20a} & & 71.95$\pm$0.18 & 71.23$\pm$0.12 & 68.90$\pm$0.47 \\
			Random Tickets$^\dagger$ \citep{su:20} & & 72.55$\pm$0.14 & 71.37$\pm$0.09 & 68.98$\pm$0.34 \\
			Low-Rank & & 62.32$\pm$0.26 & 54.97$\pm$5.28 & 42.80$\pm$3.14 \\
			Low-Rank (SI) & & 63.77$\pm$0.93 & 58.51$\pm$1.90 & 45.02$\pm$3.24 \\
			Low-Rank (FD) & & 64.07$\pm$1.58 & 59.34$\pm$2.97 & 45.73$\pm$2.62 \\
			Low-Rank (SI \& FD) & & 65.61$\pm$1.48 & 60.1$\pm$1.29 & 46.43$\pm$1.26\\
			\hline
			{\bf Tiny-ImageNet} & uncompressed & 0.1 & 0.05 & 0.02 \\
			\hline
			baseline$^\ast$ &  61.38 \\
			baseline$^\dagger$ & 61.57 \\
			baseline (reproduced) & 62.19$\pm$0.74 \\
			OBD$^\ast$ \citet{lecun:90} & & 61.21 & 60.49 & 54.98 \\
			MLPrune$^\ast$ \citep{zeng:19} & & 60.23 & 59.23 & 55.55 \\
			LT$^\ast$, original initialization & & 60.32 & 59.48 & 55.12 \\
			LT$^\ast$, reset to epoch 5 & & 61.19 & 60.57 & 56.18 \\
			DSR$^\ast$ \citep{mostafa:19} & & 62.43 & 59.81 & 58.36 \\
			SET$^\ast$ \citep{mocanu:18} & & 62.49 & 59.42 & 56.22 \\
			Deep-R$^\ast$ \citep{bellec:18} & & 55.64 & 52.93 & 49.32 \\
			SNIP$^\ast$ \citep{lee:18} & & 61.02$\pm$0.41 & 59.27$\pm$0.39 & 48.95$\pm$1.73 \\
			GraSP$^\ast$ \citep{wang:20a} & & 60.76$\pm$0.23 & 59.50$\pm$0.33 & 57.28$\pm$0.34 \\
			Random Tickets$^\dagger$ \citep{su:20} & & 60.94$\pm$0.38 & 59.48$\pm$0.43 & 55.87$\pm$0.16 \\
			Low-Rank & & 48.50$\pm$3.43 & 46.98$\pm$1.68 & 36.57$\pm$1.03 \\
			Low-Rank (SI) & & 49.87$\pm$1.39 & 46.73$\pm$2.83 & 37.25$\pm$0.19 \\
			Low-Rank (FD) & & 53.93$\pm$0.66 & 47.95$\pm$1.13 & 35.63$\pm$0.71 \\
			Low-Rank (SI \& FD) & & 53.53$\pm$0.45 & 47.60$\pm$0.51 & 35.68$\pm$0.64 \\
			\hline
		\end{tabular}
		\begin{tablenotes}[leftmargin=*]\setlength\itemsep{-1mm}
			\item[$^\ast$] Obtained from \citet{wang:20a}.
			\item[$^\dagger$] Obtained from \citet{su:20}.\vspace{-3mm}
		\end{tablenotes}
	\end{threeparttable}
\end{table}
	
\begin{table}[!t]
	\centering
	\small
	\begin{threeparttable}
		\caption{\label{tab:resnet32full}
			Pruning, sparse training, and low-rank training for ResNet-32$^\textrm{x2}$, i.e. the model of \citet{he:16} with twice the number of filters, as in \citet{wang:20a}. 
		}
		\begin{tabular}{lcccc}
			\hline
			{\bf CIFAR-10} & uncompressed & 0.1 & 0.05 & 0.02 \\
			\hline
			baseline$^\ast$ &  94.80 \\
			baseline$^\dagger$ & 94.62 \\
			baseline (reproduced) & 94.49$\pm$0.29 \\
			OBD$^\ast$ \citet{lecun:90} & & 94.17 & 93.29 & 90.31 \\
			MLPrune$^\ast$ \citep{zeng:19} & & 94.21 & 93.02 & 89.65 \\
			LT$^\ast$, original initialization & & 92.31 & 91.06 & 88.78 \\
			LT$^\ast$, reset to epoch 5 & & 93.97 & 92.46 & 89.18 \\
			LR Rewinding$^\dagger$ \citep{renda:20} & & 94.14$\pm$0.10 & 93.02$\pm$0.28 & 90.83$\pm$0.22 \\
			Hybrid Tickets$^\dagger$ \citep{su:20} & & 93.98$\pm$0.15 & 92.96$\pm$0.13 & 90.85$\pm$0.06 \\
			DSR$^\ast$ \citep{mostafa:19} & & 92.97 & 91.61 & 88.46 \\
			SET$^\ast$ \citep{mocanu:18} & & 92.30 & 90.76 & 88.29 \\
			Deep-R$^\ast$ \citep{bellec:18} & & 91.62 & 89.84 & 86.45 \\
			SNIP$^\ast$ \citep{lee:18} & & 92.59$\pm$0.10 & 91.01$\pm$0.21 & 87.51$\pm$0.31 \\
			GraSP$^\ast$ \citep{wang:20a} & & 92.38$\pm$0.21 & 91.39$\pm$0.25 & 88.81$\pm$0.14 \\
			Random Tickets$^\dagger$ \citep{su:20} & & 92.97$\pm$0.05 & 91.60$\pm$0.26 & 89.10$\pm$0.33 \\
			Low-Rank & & 93.59$\pm$0.12 & 92.45$\pm$0.69 & 87.95$\pm$0.91 \\
			Low-Rank (SI) & & 92.52$\pm$0.66 & 91.72$\pm$0.63 & 87.36$\pm$0.65 \\
			Low-Rank (FD) & & 92.92$\pm$0.29 & 89.44$\pm$0.25 & 83.05$\pm$1.02 \\
			Low-Rank (SI \& FD) & & 94.34$\pm$0.34 & 92.90$\pm$0.43 & 87.97$\pm$0.86 \\
			\hline
			{\bf CIFAR-100} & uncompressed & 0.1 & 0.05 & 0.02 \\
			\midrule
			baseline$^\ast$ & 74.64 \\
			baseline$^\dagger$ & 74.57 \\
			baseline (reproduced) & 75.41$\pm$1.26 \\
			OBD$^\ast$ \citet{lecun:90} & & 71.96 & 68.73 & 60.65 \\
			MLPrune$^\ast$ \citep{zeng:19} & & 72.34 & 67.58 & 59.02 \\
			LT$^\ast$, original initialization & & 68.99 & 65.02 & 57.37 \\
			LT$^\ast$, reset to epoch 5 & & 71.43 & 67.28 & 58.95 \\
			LR Rewinding$^\dagger$ \citep{renda:20} & & 72.41$\pm$0.49 & 67.22$\pm$3.42 & 59.22$\pm$1.15 \\
			Hybrid Tickets$^\dagger$ \citep{su:20} & & 71.47$\pm$0.26 & 69.28$\pm$0.40 & 63.44$\pm$0.34 \\
			DSR$^\ast$ \citep{mostafa:19} & & 69.63 & 68.20 & 61.24 \\
			SET$^\ast$ \citep{mocanu:18} & & 69.66 & 67.41 & 62.25 \\
			Deep-R$^\ast$ \citep{bellec:18} & & 66.78 & 63.90 & 58.47 \\
			SNIP$^\ast$ \citep{lee:18} & & 68.89$\pm$0.45 & 65.22$\pm$0.69 & 54.81$\pm$1.43 \\
			GraSP$^\ast$ \citep{wang:20a} & & 69.24$\pm$0.24 & 66.50$\pm$0.11 & 58.43$\pm$0.43 \\
			Random Tickets$^\dagger$ \citep{su:20} & & 69.70$\pm$0.48 & 66.82$\pm$0.12 & 60.11$\pm$0.16 \\
			Low-Rank & & 72.71$\pm$0.43 & 67.86$\pm$0.91 & 61.08$\pm$0.76 \\
			Low-Rank (SI) & & 71.71$\pm$1.46 & 67.41$\pm$1.12 & 60.79$\pm$0.26 \\
			Low-Rank (FD) & & 71.85$\pm$0.39 & 66.89$\pm$0.73 & 55.31$\pm$0.61 \\
			Low-Rank (SI \& FD) & & 74.41$\pm$0.67 & 70.22$\pm$0.64 & 61.40$\pm$0.96 \\
			\hline
			{\bf Tiny-ImageNet} & uncompressed & 0.15 & 0.1 & 0.1 \\
			\midrule
			baseline$^\ast$ &  62.89 \\
			baseline$^\dagger$ & 62.92 \\
			baseline (reproduced) & 63.02$\pm$0.94 \\
			OBD$^\ast$ \citet{lecun:90} & & 58.55 & 56.80 & 51.00 \\
			MLPrune$^\ast$ \citep{zeng:19} & & 58.86 & 57.62 & 51.70 \\
			LT$^\ast$, original initialization & & 56.52 & 54.27 & 49.47 \\
			LT$^\ast$, reset to epoch 5 & & 60.31 & 57.77 & 51.21 \\
			DSR$^\ast$ \citep{mostafa:19} & & 57.08 & 57.19 & 56.08 \\
			SET$^\ast$ \citep{mocanu:18} & & 57.02 & 56.92 & 56.18 \\
			Deep-R$^\ast$ \citep{bellec:18} & & 53.29 & 52.62 & 52.00 \\
			SNIP$^\ast$ \citep{lee:18} & & 56.33$\pm$0.24 & 55.43$\pm$0.14 & 49.57$\pm$0.44 \\
			GraSP$^\ast$ \citep{wang:20a} & & 57.25$\pm$0.11 & 55.53$\pm$0.11 & 51.34$\pm$0.29 \\
			Random Tickets$^\dagger$ \citep{su:20} & & N/A & 55.26$\pm$0.22 & 51.41$\pm$0.38 \\
			Low-Rank & & 60.82$\pm$0.72 & 58.72$\pm$0.53 & 55.39$\pm$1.02 \\
			Low-Rank (SI) & & 59.53$\pm$1.41 & 57.60$\pm$0.88 & 55.00$\pm$0.90 \\
			Low-Rank (FD) & & 59.45$\pm$0.82 & 57.24$\pm$0.61 & 54.15$\pm$1.22 \\
			Low-Rank (SI \& FD) & & 62.24$\pm$0.39 & 60.25$\pm$1.02 & 55.97$\pm$0.48 \\
			\hline
		\end{tabular}
		\begin{tablenotes}[leftmargin=*]\setlength\itemsep{-1mm}
			\item[$^\ast$] Obtained from \citet{wang:20a}.
			\item[$^\dagger$] Obtained from \citet{su:20}.\vspace{-3mm}
		\end{tablenotes}
	\end{threeparttable}
\end{table}

\begin{table}[!t]
	\centering
	\begin{threeparttable}
		\caption{\label{tab:tensorfull}
			Comparison of low-rank and tensor-decomposition training (mean of 3 trials) of WideResNet28-10 on CIFAR-10.
			The best error for each compression-decomposition setting is {\bf bolded};
			in addition, the best error at each compression level is \underline{underlined}.
		}
		\begin{tabular}{clccc}
			\toprule
			& & & \multicolumn{2}{c}{initialization} \\
			compression & decomposition & decay ($\lambda=0.005$) & default & spectral \\
			\midrule
			none ($\approx$36.5 param.) & & & \underline{\bf 3.21$\pm$0.22} \\
			\midrule
			\multirow{9}{*}{\shortstack{\\\\\\0.01667\\($\approx$0.61M param.)}} 
			& & regular & 7.62$\pm$0.08 & 7.72$\pm$0.11 \\
			& Low-Rank & CRS$^\ast$ & 8.63$\pm$0.87 & 8.83$\pm$0.17 \\
			& & Frobenius & {\bf 7.23$\pm$0.32} & 7.26$\pm$0.45 \\
			\cmidrule{2-5}
			& & regular & 8.55$\pm$4.22 & 7.72$\pm$0.46 \\
			& Tensor-Train & CRS$^\ast$ & 7.01$\pm$0.35 & 6.52$\pm$0.17 \\
			& & Frobenius & 7.26$\pm$0.73 & \underline{\bf 6.22$\pm$0.50} \\
			\cmidrule{2-5}
			& & regular & 10.67$\pm$7.68 & 9.91$\pm$6.71 \\
			& Tucker & CRS$^\ast$ & 8.28$\pm$0.05 & {\bf 6.79$\pm$0.34} \\
			& & Frobenius & 8.65$\pm$0.27 & 7.07$\pm$0.28 \\
			\midrule
			\multirow{9}{*}{\shortstack{\\\\\\0.06667\\($\approx$2.4M param.)}} 
			& & regular & 5.31$\pm$0.06 & 5.49$\pm$0.42 \\
			& Low-Rank & CRS$^\ast$ & 5.86$\pm$0.22 & 5.53$\pm$0.72 \\
			& & Frobenius & 3.89$\pm$0.17 & \underline{\bf 3.86$\pm$0.22} \\
			\cmidrule{2-5}
			& & regular & 7.33$\pm$0.27 & 6.49$\pm$0.30 \\
			& Tensor-Train & CRS$^\ast$ & 5.20$\pm$0.37 & {\bf 4.79$\pm$0.15} \\
			& & Frobenius & 5.02$\pm$0.15 & 5.10$\pm$0.35 \\
			\cmidrule{2-5}
			& & regular & 5.64$\pm$0.19 & 5.61$\pm$0.46 \\
			& Tucker & CRS$^\ast$ & 5.30$\pm$0.35 & {\bf 5.15$\pm$0.23} \\
			& & Frobenius & 5.30$\pm$0.12 & 5.24$\pm$0.21 \\
			\bottomrule
		\end{tabular}
		\begin{tablenotes}[leftmargin=*]\setlength\itemsep{-1mm}
			\item[$^\ast$] Regular weight-decay with coefficient scaled by the compression rate \citep{gray:19}.
		\end{tablenotes}
	\end{threeparttable}
\end{table}

\begin{table}[!t]
	\centering
	\begin{threeparttable}
		\caption{\label{tab:distillfull}
			Overcomplete ResNet performance (mean of 3 trials).
			The best accuracy at each depth is {\bf bolded};
			cases where we match the next deeper network with around the same {\em training} memory are \underline{underlined}.
			Note that training times are reported for roughly similar machine types;
			all ResNet56 and ResNet110 times are reported on identical machine types.
		}
		\begin{tabular}{clcccc}
			\toprule
			\multicolumn{3}{l}{\bf CIFAR-10} & \multicolumn{3}{c}{factorization type} \\
			depth & decay type/metric & unfactorized & full & deep & wide \\
			\midrule
				& WD test accuracy &	$92.66\pm0.34$ &	$92.11\pm0.36$ &	$91.37\pm0.09$ &	$92.26\pm0.35$ \\
				& ~~training time (H) & $1.51\pm0.17$ &	$1.86\pm0.23$ &	$2.10\pm0.14$ &	$2.20\pm0.22$ \\
			32	& FD test accuracy  &			&	$\underline{93.32\pm0.07}$ &	$93.11\pm0.13$ &	$\mathbf{93.36\pm0.11}$ \\
				& ~~training time (H) && $1.60\pm0.22$ &	$2.40\pm0.16$ &	$1.97\pm0.23$ \\
				& training param. (M) &	0.46 &	0.95 &	1.43 &	2.84 \\
			\midrule
				& WD test accuracy &	$93.15\pm0.56$ &	$92.95\pm0.24$ &	$92.52\pm0.13$ &	$93.08\pm0.23$ \\
				& ~~training time (H) & $1.39\pm0.04$ &	$2.12\pm0.00$ &	$2.45\pm0.12$ &	$3.51\pm0.09$ \\
			56	& FD test accuracy  &			&	$\underline{\mathbf{94.13\pm0.16}}$ &	$93.89\pm0.13$ &	$93.83\pm0.10$ \\
				& ~~training time (H) && $2.22\pm0.07$ &	$2.89\pm0.05$ &	$3.65\pm0.14$ \\
				& training param. (M) &	0.85 &	1.72 &	2.59 &	5.16 \\
			\midrule
				& WD test accuracy &	$93.61\pm0.28$ &	$93.73\pm0.16$ &	$93.25\pm0.07$ &	$93.53\pm0.07$ \\
				& ~~training time (H) & $3.51\pm0.06$ &	$3.86\pm0.13$ &	$4.92\pm0.15$ &	$6.37\pm0.21$ \\
			110	& FD test accuracy  &			&	$94.28\pm0.05$ &	$\mathbf{94.42\pm0.09}$ &	$94.13\pm0.05$ \\
				& ~~training time (H) && $4.18\pm0.04$ &	$5.77\pm0.22$ &	$6.55\pm0.18$ \\
				& training param. (M) &	1.73 &	3.47 &	5.21 &	10.39 \\
			\midrule
			\multicolumn{3}{l}{\bf CIFAR-100} & \multicolumn{3}{c}{factorization type} \\
			depth & metric & unfactorized & full & deep & wide \\
			\midrule
				& WD test accuracy &	$68.17\pm0.48$ &	$68.84\pm0.27$ &	$67.84\pm0.50$ &	$69.23\pm0.32$ \\
				& ~~training time (H) & $1.60\pm0.21$ &	$1.80\pm0.23$ &	$1.61\pm0.04$ &	$2.48\pm0.19$ \\
			32	& FD test accuracy  &			&	$\underline{70.32\pm0.41}$ &	$70.25\pm0.46$ &	$\mathbf{70.53\pm0.07}$ \\
				& ~~training time (H) && $1.61\pm0.10$ &	$2.13\pm0.20$ &	$2.61\pm0.18$  \\
				& training param. (M) &	0.47 &	0.95 &	1.44 &	2.84 \\
			\midrule
				& WD test accuracy &	$70.25\pm0.30$ &	$70.6\pm0.53$ &	$69.62\pm0.35$ &	$70.69\pm0.21$ \\
				& ~~training time (H) & $1.39\pm0.04$ &	$2.09\pm0.04$ &	$2.59\pm0.04$ &	$3.48\pm0.27$ \\
			56	& FD test accuracy  &			&	$\underline{72.79\pm0.37}$ &	$72.28\pm0.26$ &	$\mathbf{73.01\pm0.39}$ \\
				& ~~training time (H) && $2.30\pm0.05$ &	$2.96\pm0.17$ &	$3.33\pm0.19$ \\
				& training param. (M) &	0.86 &	1.73 &	2.60 &	5.17 \\
			\midrule
				& WD test accuracy &	$72.11\pm0.37$ &	$72.33\pm0.57$ &	$71.28\pm0.62$ &	$71.70\pm0.55$ \\
				& ~~training time (H) & $3.87\pm0.08$ &	$3.78\pm0.04$ &	$4.95\pm0.23$ &	$6.12\pm0.25$ \\
			110	& FD test accuracy  &			&	$73.98\pm0.22$ &	$\mathbf{74.17\pm0.23}$ &	$73.69\pm0.72$ \\
				& ~~training time (H) && $4.46\pm0.07$ &	$5.66\pm0.18$ &	$6.81\pm0.18$ \\
				& training param. (M) &	1.73 &	3.48 &	5.22 &	10.40 \\
			\bottomrule
		\end{tabular}
	\end{threeparttable}
\end{table}

\section{Experimental Details for Training Transformer Models}\label{app:attn}

For experiments with Transformer models on machine translation we used code provided here: \url{https://github.com/StillKeepTry/Transformer-PyTorch}.
All hyperparameter settings are the same, except initialization and regularization as appropriate.

For comparisons with LAMB optimization of BERT, we use an implementation provided by NVIDIA: \url{https://github.com/NVIDIA/DeepLearningExamples/tree/master/PyTorch/LanguageModeling/BERT}.
All hyperparameter settings are the same, except initialization and regularization as appropriate.
For fine-tuning on SQuAD we apply the same optimization routine to all pre-trained models.

%\begin{figure}[!h]
%	\centering
%	%	\includegraphics[width=\linewidth]{}
%	\caption{\label{fig:example}
%		image caption
%	}
%\end{figure}

%\begin{algorithm}[!h]
%	\DontPrintSemicolon
%	\KwIn{input to algorithm}
%	\For{iteration}{
%		do something \tcp*{comment}
%	}
%	\caption{\label{alg:example}
%		algorithm description
%	}
%\end{algorithm}
% !TEX root = main.tex

\section{Past Work on Knowledge Distillation}\label{app:kd}

We first briefly summarize past work on overcomplete KD.
While the use of factorized neural layers for improving training has theoretical roots \citep{arora:18b,du:19}, we are aware of two other works focusing on experimental practicality:
ExpandNets \citep{guo:20} and DO-Conv \citep{cao:20}.
As the former focuses on small student networks, numerically we compare directly to the latter, showing much better improvement due to distillation for both ResNet56 and ResNet110 on both CIFAR-10 and CIFAR-100 in Table~\ref{tab:distill}.
Note that we are also aware of one paper proposing a related method that also trains an over-parameterized model without increasing expressivity that can then be collapsed into a smaller ``original" model \citep{ding:19};
their approach, called ACNet, passes the input to each layer through differently-shaped kernels that can be composed additively.
Note that it is unclear how to express this method as a factorization and it may not be easy to generalize to non-convolutional networks, so we do not view it as an overcomplete KD approach.

We now conduct a brief comparison with both these works and more standard approaches to KD.
Direct comparison with past work is made difficult by the wide variety of training routines, teacher models, student models, and evaluation procedures employed by the community.
Comparing our specific approach is made even more difficult by the fact that we have no teacher network, or at least not one that is a standard model used in computer vision.
Nevertheless, in Table~\ref{tab:distillcomp} we collect an array of existing results that can be plausibly compared to our own overcomplete distillation of ResNets.
Even here, note that absolute numbers vary significantly, so we focus on changes in accuracy.
As can be seen from the results, our overcomplete approach yields the largest improvements for ResNet56 and ResNet110 on both CIFAR-10 and CIFAR-100.
For ResNet32, the Snapshot Distillation method of \citet{xu:19} outperforms our own, although it does not do so for ResNet110 and is not evaluated for ResNet56.
On CIFAR-10 the additive ACNet approach also has a larger performance improvement for ResNet32.
Nevertheless, our method is still fairly close in these cases, so the results in Table~\ref{tab:distillcomp} together with the simplicity and short (single-stage) distillation routine of our overcomplete approach suggest that it should be a standard baseline for KD.

\begin{table}[!t]
	\centering
	\begin{threeparttable}
		\caption{\label{tab:distillcomp}
			Comparison of our knowledge distillation approach with past work in which the same student network attains roughly similar performance.
		}
		\begin{tabular}{clcccc}
			\toprule
			\multicolumn{3}{l}{\bf CIFAR-10} & reported & reported & change \\
			baseline & & & baseline & distilled & in \\
			depth & method (source) & teacher & accuracy & accuracy & accuracy \\
			\midrule
			\multirow{3}{*}{32}
			& SD \citep{xu:19} & self & 92.78 & 93.68 & {\bf +0.90} \\
			& ACNet \citep{ding:19} & additive & 94.31 & 95.09 & +0.78 \\
			& wide factorization w. FD (ours) & overcomplete & 92.66 & 93.36 & +0.70 \\
			\midrule
			\multirow{3}{*}{56}
			& KD-fn \citep{xu:20b} & ResNet110 & 93.63 & 94.14 & +0.51 \\
			& DO-Conv \citep{cao:20} & overcomplete & 93.26 & 93.38 & +0.12 \\
			& full factorization w. FD (ours) & overcomplete & 93.15 & 94.13 & {\bf +0.98} \\
			\midrule
			\multirow{3}{*}{110}
			& SD \citep{xu:19} & self & 94.00 & 94.43 & +0.43 \\
			& DO-Conv \citep{cao:20} & overcomplete & 93.72 & 93.93 & +0.21 \\
			& deep factorization w. FD (ours) & overcomplete & 93.61 & 94.42 & {\bf +0.81} \\
			\midrule
			\multicolumn{3}{l}{\bf CIFAR-100} & reported & reported & change \\
			baseline & & & baseline & distilled & in \\
			depth & method (source) & teacher & accuracy & accuracy & accuracy \\
			\midrule
			\multirow{5}{*}{32}
			& CRD \citep{tian:20} & ResNet110 & 71.14 & 73.48 & +2.34 \\
			& SD \citep{xu:19} & self & 68.99 & 71.78 & {\bf +2.79} \\
			& Snapshot \citep{yang:19b} & self & 68.39 & 69.84 & +1.45 \\
			& ACNet \citep{ding:19} & additive & 73.58 & 74.04 & +0.46 \\
			& wide factorization w. FD (ours) & overcomplete & 68.17 & 70.53 & +2.36 \\
			\midrule
			\multirow{4}{*}{56}
			& FT \citep{kim:18b} & ResNet110 & 71.96 & 74.38 & +2.42 \\
			& Snapshot \citep{yang:19b} & self & 70.06 & 70.78 & +0.72 \\ 
			& DO-Conv \citep{cao:20} & overcomplete & 70.15 & 70.78 & +0.63 \\
			& wide factorization w. FD (ours) & overcomplete & 70.25 & 73.01 & {\bf +2.76} \\
			\midrule
			\multirow{4}{*}{110}
			& SD \citep{xu:19} & self & 73.21 & 74.96 & +1.75 \\
			& Snapshot \citep{yang:19b} & self & 71.47 & 72.48 & +1.01 \\
			& DO-Conv \citep{cao:20} & overcomplete & 72.00 & 72.22 & +0.22 \\
			& deep factorization w. FD (ours) & overcomplete & 72.11 & 74.17 & {\bf +2.06} \\
			\bottomrule
		\end{tabular}
	\end{threeparttable}
\end{table}

\end{document}